\def\Dd{\mathcal{D}}
\renewcommand{\x}{\z}
\newcommand{\Sfrak}{\mathfrak{S}}
\title{Private Stochastic Convex Optimization:\\ Efficient Algorithms for Non-smooth Objectives}
\author{
  Raman Arora\\
  \texttt{arora@cs.jhu.edu}
  \and
  Teodor V. Marinov\\
  \texttt{tmarino2@jhu.edu}
  \and
  Enayat Ullah\\
  \texttt{enayat@jhu.edu}\\
  \and
  \centering Johns Hopkins University
}
\date{}
\begin{document}

\maketitle

\begin{abstract}
In this paper, we revisit the problem of private stochastic convex optimization. We propose an algorithm based on noisy mirror descent, which achieves optimal rates both in terms of statistical complexity and number of queries to a first-order stochastic oracle in the regime when the privacy parameter is inversely proportional to the number of samples.
\end{abstract}

\section{Introduction}
\label{sec:intro}
Modern machine learning systems often leverage data that are generated ubiquitously and seamlessly through devices such as smartphones, cameras, microphones, or user's weblogs, transaction logs, social media, etc. Much of this data is private, and releasing models trained on such data without serious privacy considerations can reveal sensitive information~\citep{NS08, Sweeney05}.
Consequently, much emphasis has been placed in recent years on machine learning under the constraints of a robust privacy guarantee. One such notion that has emerged as a de facto standard is that of differential privacy. 

Informally, differential privacy provides a quantitative assessment of how different are the outputs of a randomized algorithm when fed two very similar inputs. If small changes in the input do not manifest as drastically different outputs, then it is hard to discern much information about the inputs solely based on the outputs of the algorithm. In the context of machine learning, this implies that if the learning algorithm is not overly sensitive to any single datum in the training set, then releasing the trained model should preserve the privacy of the training data. This requirement, apriori, seems compatible with the goal of learning, which is to find a model that generalizes well on the population and does not overfit to the given training sample. It seems reasonable then to argue that privacy is not necessarily at odds with generalization, especially 
when large training sets are available.

We take the following stochastic optimization view of machine learning, 
where the goal is to find a predictor that minimizes the expected loss (aka risk)
\begin{equation}
\label{eq:stoch_opt_problem}
\min_{\w \in \cW} F(\w)=\mathbb{E}_{\x \sim \Dd}[f(\w,\x)], 
\end{equation}
based on i.i.d. samples from the source distribution $\Dd$, and full knowledge of the instantaneous objective function $f(\cdot,\cdot)$ and the hypothesis class $\cW$. 
We are particularly interested in convex learning problems where the hypothesis class is a convex set and the loss function $f(\cdot, \x)$ is a convex function in the first argument for all $\x \in \cZ$. 
We seek a learning algorithm that uses the smallest possible number of samples and the least runtime and returns $\tilde{\w}$ such that $F(\tilde{\w})\leq \inf_{\w \in \cW}F(\w) + \alpha$, for a user specified $\alpha > 0$, while guaranteeing differential privacy (see Section~\ref{sec:prelimDP} for a formal definition). 

A natural approach to solving Problem~\ref{eq:stoch_opt_problem} is sample average approximation (SAA), or empirical risk minimization (ERM), where we instead minimize an empirical approximation of the objective based on the i.i.d. sample. Empirical risk minimization for convex learning problems has been studied in the context of differential privacy by several researchers including~\cite{bassily2019private} who give statistically efficient algorithms with oracle complexity matching that of optimal non-private ERM. 

An alternative approach to solving Problem~\ref{eq:stoch_opt_problem} is stochastic approximation (SA), wherein rather than form an approximation of the objective, the goal is to directly minimize the true risk. The learning algorithm is an iterative algorithm that processes a single sample from the population in each iteration to perform an update. Stochastic gradient descent (SGD), for instance, is a classic SA algorithm. Recent work of \cite{feldman2019private} gives optimal rates for convex learning problems (Problem~\ref{eq:stoch_opt_problem}) using stochastic approximation for smooth loss functions; however, they leave open the question of optimal rates for non-smooth convex learning problems which include a large class of learning problems, including, for example, support vector machines. In this work, we focus on non-smooth convex learning problems and propose a simple algorithm which achieves nearly optimal rates in the special case where the privacy guarantee scales inversely with the number of observed samples. There are alternative approaches\footnote{We became aware of it in a private communication with Raef Basilly.}; for further discussion we refer the reader to Section~\ref{sec:other_appr}. 

\section{Notation and Preliminaries}
We consider the general learning setup of \cite{vapnik2013nature}. Let $\cZ$ be the sample space and let $\cD$ be an unknown distribution over $\cZ$. We are given a sample $\x_1,\x_2,\ldots, \x_n$ drawn identically and independently (i.i.d) from $\cD$ --  the samples $\x_i$ may correspond to (feature, label) tuples as in supervised learning, or to features in  unsupervised learning. We assume that loss  $f:\cW \times \cZ \rightarrow \R$ is a convex function in its first argument $\w$ and the hypothesis set $\cW$ is a convex set. Given n samples, the goal is to minimize the population risk (Problem \ref{eq:stoch_opt_problem}).

We assume that the hypothesis class $\mathcal{W}$ is a convex, closed and bounded set in $\R^d$ equipped with norm $\norm{\cdot}$, such that $\|\w\| \leq D$ for all $\w \in \cW$. 
We recall that the dual space of $(\R^d, \norm{\cdot})$ is the set of all linear functionals over it; the dual norm, denoted by $\norm{\cdot}_*$, is defined as $\norm{h}_* = \min_{\norm{w}\leq 1} h(\w)$, where $h$ is a element of the dual space $(\R^d,\norm{\cdot}_*)$.
In general we will use $\|\cdot\|$ to denote the $\ell_2$ norm when there is no risk of confusion. 

We do not assume that $f(\w,\x)$ is necessarily differentiable and will denote an arbitrary sub-gradient as $\nabla f(\w,\x)$.
We assume that $f(\cdot,\x)$ is $L$-lipschitz with respect to the dual norm $\norm{\cdot}_*$, i.e.,  $\abs{f(\w_1,\x)-f(\w_2,\x)}\leq L\norm{\w_1-\w_2}$  for all $\x$. For convex $f$, this implies that sub-gradients with respect to $\w$, are bounded as  $\norm{\nabla f(\w,\x)}_* \leq L, \ \forall \ \w \in \mathcal{W},\x \in \mathcal{Z}$. A popular instance of the above is the $\ell_p/\ell_q$ setup, which considers $\ell_p$ norm in primal space and the corresponding $\ell_q$ norm in the dual space such that $\frac{1}{p}+\frac{1}{q} =1$.
A function $g(\cdot)$ is $\beta$-strongly smooth (or just $\beta$-smooth) if $\norm{\nabla g(\w_1)-\nabla g(\w_2)}_* \leq \beta \norm{\w_1-\w_2}, \forall \w_1,\w_2 \in \cW$. For convex functions, this is equivalent to the condition $g(\w_2) \leq g(\w_1) + \langle \nabla g(\w_1),\w_2- \w_1 \rangle + \frac{\beta}{2}\|\w_1-\w_2\|^2, \forall \w_1,\w_2\in\mathcal{W}$.
A function $g$ is $\lambda$-strongly convex if $g(\w_2) \geq g(\w_1) + \langle \nabla g(\w_1), \w_2 - \w_1\rangle + \frac{\lambda}{2}\norm{\w_1 - \w_2}^2, \forall \w_1, \w_2 \in \cW$. Finally, we use $\tilde O(\cdot)$ to suppress poly-logarithmic factors in the complexity.

\subsection{Mirror descent and potential functions}
We recall some basics of convex duality, which will help us set up the mirror descent algorithm and analysis.
For a convex function $\Phi:\mathbb{R}^d \rightarrow \mathbb{R}$, we define its conjugate $\Phi^*:\mathbb{R}^d \rightarrow \mathbb{R} \cup \{\infty\}$ as $\Phi^*(Y) = \sup_{X} \langle Y,X \rangle - \Phi(X)$. Moreover, $D_{\Phi}(\x||\y)$ denotes Bregman divergence induced by $\Phi$, defined as
\begin{align*}
    D_{\Phi}(\x||\y) = \Phi(\x) - \Phi(\y) - \langle \nabla \Phi(\y), \x - \y \rangle.
\end{align*}
For a convex set $\cW$, we denote by $\I_{\cW}$ the indicator function of the set $\cW$, $\I_{\cW}(\x) = 0 \ \text{ if }  \ \x \in \cW$, and $\infty$ otherwise. 
The following result from~\cite{kakade2009duality} establishes a natural relation between strong convexity and strong smoothness of a potential function and its conjugate, respectively.
\begin{theorem}[Theorem 6 from~\cite{kakade2009duality}]
\label{thm:kakade_sconv_ssmooth}
Assume $\Phi$ is a closed convex function. Then $\Phi$ is $\alpha$-strongly convex with respect to a norm $\|\cdot\|$ iff $\Phi^*$ is $\frac{1}{\alpha}$-strongly smooth with respect to the dual norm $\|\cdot\|_*$.
\end{theorem}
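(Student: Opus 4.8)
The statement is an ``iff'', and both directions are most cleanly handled through the standard dictionary relating strong convexity of $\Phi$ to the existence and Lipschitzness of the gradient of $\Phi^*$. Throughout I would work with the subgradient characterizations already recorded above: $\Phi$ is $\alpha$-strongly convex w.r.t.\ $\|\cdot\|$ exactly when $D_\Phi(u \| v)\ge \tfrac{\alpha}{2}\|u-v\|^2$ for all $u$ and all subgradients $\theta\in\partial\Phi(v)$ defining the divergence, and (by the equivalence noted in the preliminaries) $\Phi^*$ is $\tfrac1\alpha$-strongly smooth w.r.t.\ $\|\cdot\|_*$ exactly when it is differentiable and $\|\nabla\Phi^*(\theta_1)-\nabla\Phi^*(\theta_2)\| \le \tfrac1\alpha\|\theta_1-\theta_2\|_*$.

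\textbf{Direction ($\Rightarrow$).} Assume $\Phi$ is $\alpha$-strongly convex. Given dual points $\theta_1,\theta_2$, let $u_i$ attain the supremum in $\Phi^*(\theta_i)=\sup_u \langle\theta_i,u\rangle-\Phi(u)$ (a maximizer exists since $\Phi$ is closed and, by strong convexity, grows at least quadratically, so the objective is coercive); then $\theta_i\in\partial\Phi(u_i)$ and hence $u_i\in\partial\Phi^*(\theta_i)$. Adding the two strong-convexity inequalities $D_\Phi(u_2\|u_1)\ge\tfrac\alpha2\|u_1-u_2\|^2$ and $D_\Phi(u_1\|u_2)\ge\tfrac\alpha2\|u_1-u_2\|^2$ and using the identity $D_\Phi(u_2\|u_1)+D_\Phi(u_1\|u_2)=\langle\theta_1-\theta_2,\,u_1-u_2\rangle$ yields the ``strong monotonicity'' bound $\langle\theta_1-\theta_2,\,u_1-u_2\rangle\ge\alpha\|u_1-u_2\|^2$. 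Combining this with H\"older's inequality $\langle\theta_1-\theta_2,u_1-u_2\rangle\le\|\theta_1-\theta_2\|_*\,\|u_1-u_2\|$ gives $\|u_1-u_2\|\le\tfrac1\alpha\|\theta_1-\theta_2\|_*$. Taking $\theta_1=\theta_2$ shows the maximizer is unique, so $\Phi^*$ is differentiable with $\nabla\Phi^*(\theta_i)=u_i$, and the displayed bound is precisely $\tfrac1\alpha$-Lipschitzness of $\nabla\Phi^*$, i.e.\ $\tfrac1\alpha$-strong smoothness. (If the ``upper Bregman'' form is preferred, integrate this Lipschitz bound along the segment $[\theta_2,\theta_1]$.)

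\textbf{Direction ($\Leftarrow$).} Assume $\Phi^*$ is $\tfrac1\alpha$-strongly smooth, hence differentiable. Since $\Phi$ is closed convex, $\Phi=\Phi^{**}$, so $\Phi(y)=\sup_\eta \langle\eta,y\rangle-\Phi^*(\eta)$. Fix $x$ and $\theta\in\partial\Phi(x)$; by the Fenchel--Young equality $\Phi(x)+\Phi^*(\theta)=\langle\theta,x\rangle$ and $\nabla\Phi^*(\theta)=x$. For any $\eta$, strong smoothness gives $\Phi^*(\eta)\le\Phi^*(\theta)+\langle x,\eta-\theta\rangle+\tfrac1{2\alpha}\|\eta-\theta\|_*^2$; substituting this into the variational formula for $\Phi(y)$ together with $\Phi^*(\theta)=\langle\theta,x\rangle-\Phi(x)$, the $\langle\theta,x\rangle$ terms cancel and one obtains
\[
\Phi(y)\ \ge\ \Phi(x)+\langle\theta,\,y-x\rangle+\langle\eta-\theta,\,y-x\rangle-\tfrac1{2\alpha}\|\eta-\theta\|_*^2 .
\]
Taking the supremum over $\eta$ and using the conjugate identity $\big(\tfrac1{2\alpha}\|\cdot\|_*^2\big)^{*}=\tfrac\alpha2\|\cdot\|^2$ (immediate from the definition of the dual norm) gives $\Phi(y)\ge\Phi(x)+\langle\theta,y-x\rangle+\tfrac\alpha2\|y-x\|^2$, which is $\alpha$-strong convexity of $\Phi$.

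\textbf{Main obstacle.} Conceptually the argument is short; the real care is the convex-analytic bookkeeping. The point that needs the most attention is, in the forward direction, justifying that the supremum defining $\Phi^*(\theta)$ is attained (so that ``$u_i=\nabla\Phi^*(\theta_i)$'' is legitimate) — this is exactly where closedness of $\Phi$ and the quadratic lower bound from strong convexity enter — and, in the reverse direction, the use of $\Phi=\Phi^{**}$, which likewise relies on $\Phi$ being closed and convex as assumed. Everything else is routine: the only genuine computation is the conjugate-of-squared-norm identity, which is a one-liner.
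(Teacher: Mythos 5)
The paper does not prove this theorem; it is stated as a black-box citation of Theorem~6 of \cite{kakade2009duality}, and no argument for it appears anywhere in the text. Your blind proof is, however, correct, and it is essentially the same argument given in that reference: the forward direction sums the two Bregman strong-convexity inequalities at the Fenchel-conjugate maximizers $u_i$ to obtain the strong-monotonicity bound $\langle\theta_1-\theta_2,u_1-u_2\rangle\ge\alpha\|u_1-u_2\|^2$, then applies H\"older and the uniqueness of the maximizer to get Lipschitz differentiability of $\Phi^*$; the converse plugs the quadratic upper bound on $\Phi^*$ into $\Phi=\Phi^{**}$ and conjugates the squared dual norm. The convex-analytic bookkeeping you flag (coercivity from strong convexity guaranteeing attainment, $\Phi=\Phi^{**}$ from closedness, singleton subdifferential implying differentiability of the finite convex function $\Phi^*$) is handled correctly, so the proposal is a sound proof of the cited result rather than an alternative to anything in the paper.
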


\subsection{Differential privacy} 
\label{sec:prelimDP}
We seek to design algorithms for solving the stochastic convex optimization problem (Problem \ref{eq:stoch_opt_problem}) with the additional constraint that the algorithm's output is differentially private. 

\begin{definition}[$(\epsilon,\delta)$-differential privacy~\citep{dwork2006calibrating}] An algorithm $\mathcal{A}$ satisfies $(\epsilon,\delta)$-differential privacy if given two datasets $S$ and $S'$, differing in only one data point, it satisfies that for any measurable event $R \in \text{Range}(\cA)$
\begin{align*}
    \mathbb{P}(\mathcal{A}(S)\in R) \leq e^\epsilon \mathbb{P}(\mathcal{A}(S')\in R) + \delta.
\end{align*}
\end{definition}

\section{Related Work}
In convex learning and optimization, the following four classes of functions are widely studied: (a) $L$-lispchitz convex functions, (b)  $\beta$-smooth and convex functions, (c)  $\lambda$-strongly convex functions, and (d)  $\beta$-smooth and $\lambda$-strongly convex functions. 
In the computational framework of first order stochastic oracle, algorithms with optimal oracle complexity for all these classes of functions have long been known~\citep{agarwal2009information}. However, the landscape of known results changes with the additional constraint of privacy. 
We can trace two approaches to solving the private version of Problem~\ref{eq:stoch_opt_problem}. The first is private ERM \citep{CMS11,bassily2014private,Feldman2018PrivacyAB,bassily2019private} and the second is private stochastic approximation \citep{feldman2019private}.
\cite{CMS11} begin the study of private ERM by constructing algorithms based on output perturbation and objective perturbation. Under the assumption that the stochastic gradients are $\beta$-Lipschitz continuous, the output perturbation bounds achieve excess population risk of $O(LD\max(1/\sqrt{n}, d/(n\epsilon),d\sqrt{\beta}/(n^{2/3}\epsilon)))$, where $L$ is the Lipschitz constant of the loss function and $D$ measures the diameter of the hypothesis class in the respective norm. The objective perturbation bounds have a similar form.
\cite{bassily2014private} showed tight upper and lower bounds for the excess empirical risk. They also showed bounds for the excess population risk when the loss function does not have Lipschitz continuous gradients, achieving a rate of $O(d^{1/4}/(\sqrt{n}\epsilon))$. Their techniques appeal to uniform convergence i.e. bounding $\sup_{\w \in \cW} F(\w) - \hat F(\w)$, and convert the guarantees on excess empirical risk to get a bound on the excess population risk.
These guarantees, however, were sub-optimal. \cite{bassily2019private} improved these to get optimal bounds on excess population risk, leveraging connections between algorithmic stability and generalization. The algorithms given by \cite{bassily2019private} are sample efficient but their runtimes are superlinear (in the number of samples), whereas the non-private counterparts run in linear time. In a follow-up work, \cite{feldman2019private} improved the runtime of some of these algorithms without sacrificing statistical efficiency; however, the authors require that the stochastic gradients are Lipschitz continuous. 
Essentially, the statistical complexity of private stochastic convex optimization has been resolved, but some questions about computational efficiency still remain open.  We begin with a discussion of different settings for the population loss in subsequent paragraphs, describe what is already known, and what are the avenues for improvement.

\paragraph{Non-smooth Lipschitz Convex.}
For the class of $L$-Lipschitz convex functions, \cite{bassily2014private} improved upon \cite{CMS11} and gave optimal bounds on excess empirical risk of $O\left(\frac{d}{\epsilon n}\right)$. They then appeal to uniform convergence to convert the guarantees on excess empirical risk to get an excess population risk of $O\left(\max\left(\frac{d^{1/4}}{\sqrt{n}},\frac{\sqrt{d}}{\epsilon n}\right)\right)$. This is sub-optimal and was very recently improved by \cite{bassily2019private} using connections between algorithmic stability and generalization to get $O\left(\max\left(\frac{1}{\sqrt{n}},\frac{\sqrt{d}}{\epsilon n}\right)\right)$. This is optimal since $\frac{1}{\sqrt{n}}$ is the optimal excess risk without privacy constraints, and $\frac{\sqrt{d}}{\epsilon n}$ is the optimal excess risk when the data distribution is the empirical distribution. This resolves the statistical complexity of private convex learning and shows that in various regimes of interest, e.g., when $d=\Theta(n)$ and $\epsilon = \Theta(1)$, the constraint of privacy has no effect on utility.
However, the algorithm  proposed by \cite{bassily2019private} is based on Moreau smoothing and proximal methods, and requires $O(n^5)$ stochastic gradient computations. This rate vastly exceeds the gradient computations needed for non-private stochastic convex optimization which are of the order $O(n)$. The computational complexity was improved by \cite{feldman2019private} to $O(n^2)$ by using a regularized ERM algorithm that runs in phases and after each phase, noise is added to the solution (output perturbation) and used as regularization for the next phase. 
 
\paragraph{Smooth Lipschitz Convex.} For $\beta$-smooth convex $L$-Lispschitz functions, \cite{bassily2019private} give an algorithm with optimal bounds on  excess risk of $O\left(LD\max\left\{\frac{1}{\sqrt{n}},\frac{\sqrt{d}}{\epsilon n}\right\}\right)$ with $O\left(\min\left\{n^{3/2},\frac{n^{5/2}}{d}\right\}\right)$ queries to the stochastic gradient oracle. This, again, was improved in a later work of \cite{feldman2019private} to $O(n)$ stochastic gradient queries. Note that even for non-private stochastic optimization $O(n)$ stochastic gradient computations are necessary, so \cite{feldman2019private} achieve optimal statistical and oracle complexity for the smooth Lipschitz convex functions. The algorithm they present is an instance of a single-pass noisy SGD, and the guarantees hold for the last iterate. 

 \paragraph{(Smooth and Non-smooth) Lipschitz Strongly Convex.} For $L$-Lispchitz $\lambda$-strongly convex functions, \cite{bassily2014private} gave an algorithm with optimal excess empirical risk which is in $\tilde O\left(\frac{d}{\lambda n^2 \epsilon^2}\right)$. However, as in the non-strongly convex case, the corresponding excess population risk is $\tilde O\left(\frac{\sqrt{d}}{\lambda \epsilon n}\right)$, which is sub-optimal. For this case, \cite{feldman2019private} proposed an algorithm which achieves optimal rates in $O(n)$ time for smooth functions but $O(n^2)$ for non-smooth functions.

\section{Algorithm and Utility Analysis}
In this section, we describe the proposed algorithm and provide convergence guarantees. 

Recall that we are given a set of $n$ samples~$(\x_1,\ldots,\x_n)$ drawn i.i.d. from $\cD$. We consider an iterative algorithm, wherein, at time $t$,  we sample index $Y_t$ uniformly at random from the set of indices, $[n]$. Note that $Y_t$ is a random variable; we denote the realization of $Y_t$ as $y_t$. Through the run of the algorithm, we maintain a set, $F_t$, of indices observed until time $t$, i.e., $F_t = \{ y_{\tau} \mid \tau < t \}.$

If $y_t \notin F_t$, i.e., $\x_{y_t}$ is a \emph{fresh} sample that has not been seen and processed prior to the $t^\textrm{th}$ iteration, then we proceed with a projected gradient descent step using the noisy gradient $\nabla f(\w_t,\x_{y_t}) + \xi_t$. If $y_t \in F_t$, then the algorithm simply perturbs the current iterate, $\w_t$, and projects on to the set $\cW$. We remark that the \emph{noise-only} step is important for the privacy analysis, as it allows for privacy amplification by sub-sampling.

The algorithm terminates when half of the points in the training set have been processed at least once, i.e., the size of $F_t$ is greater than or equal to $n/2$. We denote this stopping time by $\tau$. We refer the reader to Algorithm~\ref{alg:private_SGD} for the pseudo-code.

\begin{algorithm}[h]
\caption{\texttt{Private SGD}($\w_1,\{\eta_t\}_t,\{\xi_t\}_t,\{\x_t\}_t$)}
\label{alg:private_SGD}
\begin{algorithmic}[1]
\REQUIRE{Step size schedule $\{\eta_t\}_t$, noise sequence $\{\xi_t\}_t$, stream of data points $\{\x_t\}_t$, initial iterate $\w_1$}
\STATE $F_1 = \emptyset$,$\tau=1$
\WHILE{$|F_\tau| \leq n/2$}
\STATE Sample index $y_\tau$ uniformly from $[n]$
\IF{$\{y_\tau\} \bigcap F_\tau \neq \emptyset$}
\STATE $\w_{\tau+1} =\mathcal{P}(\w_\tau - \eta_\tau\xi_\tau)$
\STATE $F_{\tau+1} = F_\tau$
\ELSE
\STATE $\w_{\tau+1} = \mathcal{P}(\w_\tau - \eta_\tau (\nabla f(\w_\tau,\x_{y_\tau}) +\xi_\tau))$
\STATE $F_{\tau+1} = F_\tau \bigcup \{y_\tau\}$
\ENDIF
\STATE $\tau=\tau+1$
\ENDWHILE
\ENSURE{Average iterate $\hat\w_\tau = \frac{1}{|F_\tau|}\sum_{j \in F_\tau}\w_j$}
\end{algorithmic}
\end{algorithm}

Next, we establish the utility guarantee for Algorithm~\ref{alg:private_SGD}. We first show that $\tau$ is finite almost surely and bounded by $O(n)$ with probability $1- O(\exp{-n})$. The proof follows a standard bins-and-balls argument. 
\begin{theorem}
\label{thm:bins_and_balls}
For any $n\geq 16$ with probability $1-2\exp{-n/16}$ it holds that $\tau \leq 2n$, which implies $\mathbb{E}[\tau] \leq O(n)$.
\end{theorem}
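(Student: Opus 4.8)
The quantity $\tau$ is a coupon‑collector stopping time. Think of the indices in $[n]$ as bins and of each sampled index $y_t$ as a ball thrown independently and uniformly at random; then $F_\tau$ is the set of occupied bins after the throws made so far, and Algorithm~\ref{alg:private_SGD} halts at the first value of the counter for which $|F_\tau| > n/2$. Writing $\tau' = \tau - 1$ for the number of draws actually performed before halting, $\tau'$ is the number of i.i.d.\ uniform draws from $[n]$ needed to observe more than $n/2$ distinct values, and $\{\tau \le 2n\} = \{\tau' \le 2n-1\}$. The first (and only delicate) step is to make this translation precise, being careful about the off‑by‑one between $\tau$ and the number of draws and about ``$\le n/2$'' versus ``$> n/2$''.

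\textbf{Step 1: decompose into geometric inter‑arrival times.} Let $\theta_k$ be the number of additional draws needed to go from $k-1$ to $k$ distinct observed indices, so that $\tau' = \sum_{k=1}^{N} \theta_k$, where $N := \lfloor n/2\rfloor + 1$ is the number of distinct indices present when the loop stops (this holds for both parities of $n$). Conditioned on currently having $k-1$ occupied bins, each fresh draw is new independently with probability $p_k = (n-(k-1))/n$, so $\theta_k \sim \mathrm{Geom}(p_k)$ and the $\theta_k$ are mutually independent. For every $k \le N$ we have $k-1 \le \lfloor n/2\rfloor \le n/2$, hence $p_k \ge 1/2$, so $\theta_k$ is stochastically dominated by a $\mathrm{Geom}(1/2)$ variable. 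Therefore $\tau' \preceq \sum_{k=1}^{N} G_k$ with $G_k$ i.i.d.\ $\mathrm{Geom}(1/2)$.

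\textbf{Step 2: a negative‑binomial Chernoff bound.} The sum $\sum_{k=1}^{N} G_k$ is exactly the number of independent fair‑coin flips needed to obtain $N$ heads, so $\{\sum_k G_k \ge 2n\}$ is the event that $2n-1$ fair flips produce at most $N-1 = \lfloor n/2\rfloor \le n/2$ heads. Applying Hoeffding's inequality to $\mathrm{Bin}(2n-1,1/2)$, whose mean is $(2n-1)/2$, this probability is at most $\exp\!\big(-(n-1)^2/(2(2n-1))\big)$; an elementary estimate (equivalent to $12n^2 - 30n + 16 \ge 0$) shows $(n-1)^2/(2(2n-1)) \ge n/16$ for all $n \ge 2$. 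Combining with Step 1, $\mathbb{P}(\tau > 2n) = \mathbb{P}(\tau' \ge 2n) \le \mathbb{P}\big(\sum_k G_k \ge 2n\big) \le \exp(-n/16) \le 2\exp(-n/16)$, which is the claimed tail bound.

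\textbf{Step 3: the expectation bound.} This now follows either by integrating the (geometrically decaying) tail, or, more directly, from Step 1 and linearity of expectation: $\mathbb{E}[\tau] = 1 + \mathbb{E}[\tau'] \le 1 + \sum_{k=1}^{N}\mathbb{E}[G_k] = 1 + 2N \le n + 3 = O(n)$.

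\textbf{Main obstacle.} There is no genuine analytic difficulty here; the work is entirely in the bookkeeping of Step 1 — getting the reduction to the coupon‑collector picture exactly right — and in choosing a concentration inequality sharp enough to produce the stated constant $1/16$. The stochastic‑domination‑by‑geometrics device handles both cleanly, and in particular lets us avoid estimating $\mathbb{E}|F_t| = n\big(1-(1-1/n)^t\big)$ and invoking a bounded‑differences argument directly, though that route (McDiarmid on the number of occupied bins after $2n-1$ throws) would also work.
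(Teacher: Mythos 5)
Your proof is correct but takes a genuinely different route from the paper's. The paper fixes a horizon $T=2n$, counts the unique indices among $Y_1,\ldots,Y_T$ directly via $\mathbb{E}[q(Y_1,\ldots,Y_T)] = n - n(1-1/n)^T \geq n - ne^{-T/n}$, and invokes McDiarmid's bounded-differences inequality (each $Y_i$ changes $q$ by at most $1$) to show $q$ concentrates above $n/2$; the expectation bound is then obtained by integrating the tail $\mathbb{P}(\tau \geq t)$ over $t \geq 2n$. You instead exploit the Markov structure of the coupon-collector process: decompose the run into geometric inter-arrival times $\theta_k \sim \mathrm{Geom}(p_k)$ with $p_k \geq 1/2$ as long as fewer than half the bins are filled, stochastically dominate by i.i.d.\ $\mathrm{Geom}(1/2)$ variables, and reduce the tail of the resulting negative binomial to a one-sided Hoeffding bound on $\mathrm{Bin}(2n-1,1/2)$. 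Both arguments are standard, and yours has a few small advantages: the constant comes out with room to spare (you get $e^{-n/16}$ rather than $2e^{-n/16}$), the elementary inequality $(n-1)^2/(2(2n-1)) \geq n/16$ holds already for $n\geq 2$ so the hypothesis $n\geq 16$ is not actually needed for your tail bound, and the expectation estimate $\mathbb{E}[\tau] \leq 1 + 2N \leq n+3$ drops out of linearity without a separate tail-integration step. The paper's route is marginally more black-box — it needs only the bounded-differences property and not the independence or distribution of the inter-arrival times — which would generalize more easily if the index sampling were not i.i.d.
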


We proceed with bounding the regret of Algorithm~\ref{alg:private_SGD}. Given a sequence of convex functions $\tilde f_t : \mathcal{W} \rightarrow \mathbb{R}$, where $f_t(\cdot) = f(\cdot, \z)$, we bound the regret 
\begin{align*}
    \bar R(\tau,u) = \sum_{t=1}^\tau \mathbb{E}[\tilde f_t(\w_t)] - \sum_{t=1}^\tau \mathbb{E}[\tilde f_t(\u)],
\end{align*}
where the expectation is with respect to any  randomness in the algorithm and randomness of $\tilde f_t$'s. We assume full access to the gradient of $\tilde f_t$ and that the diameter of $\mathcal{W}$ is $D$, i.e., $\forall \w,\v \in \mathcal{W}, \|\w - \v\| \leq D$. 

Our setup fits the popular framework of Online Stochastic Mirror Descent (OSMD) algorithm, wherein, given a strictly convex potential $\Phi: \mathbb{R}^d \rightarrow \mathbb{R}$, the updates are given as 
\begin{equation}
\label{eq:OSMD}
\begin{aligned}
    \tilde \w_{t+1} &= \nabla \Phi^*(\nabla \Phi(\w_t) - \eta\nabla \tilde f_t(\w_t))\\
    \w_{t+1} &= \argmin_{\w \in \cW} D_{\Phi}(\w || \tilde \w_{t+1}).
\end{aligned}
\end{equation}
We utilize the following result. 

\begin{theorem}[Theorem 5.5~\citep{bubeck2012regret}]
\label{thm:bubeck}
Let $f_1,\ldots,f_\tau$ be a sequence of functions from  $\mathbb{R}^d$ to $\mathbb{R}$. Suppose $\nabla \tilde f_t$ is an unbiased estimator of $\nabla f_t$, i.e., $\mathbb{E}[\nabla \tilde f_t] = \nabla f_t$. If one initializes OSMD~\ref{eq:OSMD} with $\w_1 = \argmin_{\w \in \cW} \Phi(\w)$, then the expected pseudo-regret $\bar R(\tau)$ is bounded as
\begin{align*}
    \bar R(\tau,\u) \leq \frac{D_{\Phi}(\u||\w_1)}{\eta} + \frac{1}{\eta} \sum_{t=1}^\tau \mathbb{E}[D_{\Phi^*}(\nabla f_t(\w_t) - \eta\nabla \tilde f_t(\w_t)||\nabla f_t(\w_t))].
\end{align*}
\end{theorem}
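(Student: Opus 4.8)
This is the standard ``mirror descent through Bregman geometry'' analysis, carried out first for a fixed realization of the stochastic gradients and then wrapped in an expectation. Write $g_t=\nabla\tilde f_t(\w_t)$, so the update~\eqref{eq:OSMD} reads $\nabla\Phi(\tilde\w_{t+1})=\nabla\Phi(\w_t)-\eta g_t$ followed by the Bregman projection $\w_{t+1}=\argmin_{\w\in\cW}D_{\Phi}(\w\,\|\,\tilde\w_{t+1})$. The first ingredient is the three–point identity for Bregman divergences: for any $\u$,
\[
\langle \nabla\Phi(\w_t)-\nabla\Phi(\tilde\w_{t+1}),\,\w_t-\u\rangle = D_{\Phi}(\u\,\|\,\w_t)-D_{\Phi}(\u\,\|\,\tilde\w_{t+1})+D_{\Phi}(\w_t\,\|\,\tilde\w_{t+1}),
\]
which, after substituting $\nabla\Phi(\w_t)-\nabla\Phi(\tilde\w_{t+1})=\eta g_t$, gives a closed form for $\eta\langle g_t,\w_t-\u\rangle$.

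Next I would invoke the generalized Pythagorean inequality for the Bregman projection onto the convex set $\cW$: since $\u\in\cW$, $D_{\Phi}(\u\,\|\,\tilde\w_{t+1})\ge D_{\Phi}(\u\,\|\,\w_{t+1})$, so the projection only helps and we may replace $\tilde\w_{t+1}$ by $\w_{t+1}$ in the negative term. Then I would rewrite the leftover ``stability'' term $D_{\Phi}(\w_t\,\|\,\tilde\w_{t+1})$ in the dual, using $D_{\Phi}(\x\,\|\,\y)=D_{\Phi^*}(\nabla\Phi(\y)\,\|\,\nabla\Phi(\x))$ (a consequence of Fenchel's equality together with $\nabla\Phi^*=(\nabla\Phi)^{-1}$ for a Legendre potential, which is also what makes $\tilde\w_{t+1}$ well defined); since $\nabla\Phi(\tilde\w_{t+1})=\nabla\Phi(\w_t)-\eta g_t$, this term is exactly $D_{\Phi^*}(\nabla\Phi(\w_t)-\eta g_t\,\|\,\nabla\Phi(\w_t))$. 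Combining, for every $t$ and every $\u\in\cW$,
\[
\eta\langle g_t,\w_t-\u\rangle \le D_{\Phi}(\u\,\|\,\w_t)-D_{\Phi}(\u\,\|\,\w_{t+1})+D_{\Phi^*}\!\bigl(\nabla\Phi(\w_t)-\eta g_t\,\bigl\|\,\nabla\Phi(\w_t)\bigr).
\]

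Summing over $t=1,\dots,\tau$ telescopes the first two terms to $D_{\Phi}(\u\,\|\,\w_1)-D_{\Phi}(\u\,\|\,\w_{\tau+1})\le D_{\Phi}(\u\,\|\,\w_1)$ by nonnegativity of the Bregman divergence (the choice $\w_1=\argmin_{\cW}\Phi$ just guarantees $\w_1\in\cW$, so this is a genuine divergence). Convexity of $\tilde f_t$ then gives $\tilde f_t(\w_t)-\tilde f_t(\u)\le\langle g_t,\w_t-\u\rangle$, and after dividing by $\eta$ I would take expectations. The one point that needs care is the conditioning: $\w_t$ is a measurable function of the randomness strictly before round $t$ and hence independent of the fresh randomness defining $\tilde f_t$, so by the tower rule $\mathbb{E}[\langle\nabla\tilde f_t(\w_t),\w_t-\u\rangle]=\mathbb{E}[\langle\mathbb{E}[\nabla\tilde f_t(\w_t)\mid\w_t],\w_t-\u\rangle]=\mathbb{E}[\langle\nabla f_t(\w_t),\w_t-\u\rangle]\ge\mathbb{E}[f_t(\w_t)-f_t(\u)]$. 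This yields the stated bound on $\bar R(\tau,\u)$.

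The main obstacle is not a heavy computation but the two-sided bookkeeping: (i) pinning down the regularity hypotheses on $\Phi$ (closed, essentially smooth and strictly convex on the relevant domain) under which $\nabla\Phi$ is a bijection onto $\mathrm{dom}\,\Phi^*$, $\nabla\Phi^*=(\nabla\Phi)^{-1}$, and the primal–dual divergence identity holds — these make $\tilde\w_{t+1}$ well defined and the dual rewriting legitimate; and (ii) the measurability/independence step that lets the unbiasedness hypothesis be applied conditionally on $\w_t$, which is precisely what converts the gradient-inner-product regret bound into the function-value pseudo-regret $\bar R(\tau,\u)$.
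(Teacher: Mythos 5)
Your proof is a faithful reconstruction of the standard OSMD regret analysis and matches the argument in \citet{bubeck2012regret}, which the paper simply cites without reproducing; the three-point identity, generalized Pythagorean inequality for Bregman projections, the primal--dual divergence identity $D_\Phi(\x\|\y)=D_{\Phi^*}(\nabla\Phi(\y)\|\nabla\Phi(\x))$, telescoping, and the conditional-unbiasedness step to pass from $\tilde f_t$ to $f_t$ are all present and correctly applied.

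One thing you should have flagged explicitly: your derivation arrives at the stability term
$D_{\Phi^*}\!\bigl(\nabla\Phi(\w_t)-\eta\nabla\tilde f_t(\w_t)\,\bigl\|\,\nabla\Phi(\w_t)\bigr)$,
whereas the statement in this paper writes
$D_{\Phi^*}\!\bigl(\nabla f_t(\w_t)-\eta\nabla\tilde f_t(\w_t)\,\bigl\|\,\nabla f_t(\w_t)\bigr)$.
Your version is the correct one and agrees with Bubeck's original Theorem~5.5; the paper's form appears to be a typo, with $\nabla f_t$ substituted for $\nabla\Phi$. The discrepancy is harmless downstream only because the subsequent corollary immediately applies the strong-smoothness bound $D_{\Phi^*}(\x\|\y)\le\tfrac{1}{\alpha}\|\x-\y\|_*^2$, which depends solely on the difference $\eta\nabla\tilde f_t(\w_t)$ of the two arguments and is therefore insensitive to a common shift. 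A second, more cosmetic point: early on you invoke convexity of $\tilde f_t$ to pass to the inner product, but the inequality you actually use after taking expectations is convexity of $f_t$ (in $\mathbb{E}[\langle\nabla f_t(\w_t),\w_t-\u\rangle]\ge\mathbb{E}[f_t(\w_t)-f_t(\u)]$); the $\tilde f_t$-convexity step is redundant and can be dropped. Otherwise the bookkeeping --- Legendre-type regularity of $\Phi$ so that $\nabla\Phi^*=(\nabla\Phi)^{-1}$ and the mirror step is well defined, measurability of $\w_t$ with respect to the pre-$t$ filtration so that unbiasedness can be applied conditionally --- is exactly the right list of hypotheses to be careful about, and you state them correctly.
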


For any fixed $\tau < \infty$, we can view Algorithm~\ref{alg:private_SGD} as an instance of OSMD, with $\Phi \equiv \frac{1}{2}\|\cdot\|_2^2$, and $\tilde f_t(\cdot)$ defined as
\begin{align*}
    \tilde f_t(\cdot) = 
    \begin{cases}
        f(\cdot,\x_{y_t}) + \langle \xi_t,\cdot\rangle & y_t \not\in F_t\\
        \langle \xi_t,\cdot\rangle & \text{otherwise,}
    \end{cases}
\end{align*}
and $f_t(\w) = \mathbb{E}_{\xi_t}[\tilde f_t(\w)]$. Theorem~\ref{thm:kakade_sconv_ssmooth} implies that for any $\x,\y \in \mathbb{R}^d$ $D_{\Phi^*}(\x||\y) \leq 1/\alpha\|\x - \y\|_*^2$, where $\alpha$ is the strong-convexity parameter of the potential (in this case $\alpha=1$) -- using this result with Theorem~\ref{thm:bubeck}, and taking expectation with respect to the randomness in $\tau$ and $\xi_t, t\in [\tau]$, we have that for any $\u \in \cW$
\begin{equation}
\label{eq:reg_bound_eta}
\begin{aligned}
    \mathbb{E}\left[\sum_{t=1}^\tau f_t(\w_t)\right] - \mathbb{E}\left[\sum_{t=1}^\tau f_t(u)\right] &\leq \frac{\|\u - \w_1\|_2^2}{2\eta} + \frac{1}{2\eta}\left[\sum_{t=1}^\tau \eta^2\|\nabla \tilde f_t(\w_t)\|_2^2\right]\\
    &\leq \frac{\|\u - \w_1\|_2^2}{2\eta} + \eta \mathbb{E}\left[\sum_{t=1}^\tau \| \nabla f_t(\w_t)\|_2^2 + \|\xi_t\|_2^2\right].
\end{aligned}
\end{equation}
Since $\xi \sim \mathcal{N}(0, \sigma\I)$, we have the following corollary.

\begin{corollary}
\label{cor:reg_bound}
Suppose the diameter of $\cW$ is bounded by $D$ and that $f(\cdot,\x_t)$ is an $L$-Lipschitz function for all $t \in [n]$. Running Algorithm~\ref{alg:private_SGD} for $\tau$ iterations with noise sequence $\xi_t\sim \mathcal{N}(0, \sigma\I)$ and step size $\eta_t = \frac{D}{\sqrt{n}(L + \sigma\sqrt{d})}$ guarantees that
\begin{align*}
    \mathbb{E}\left[\sum_{t=1}^\tau f_t(\w_t)\right] - \mathbb{E}\left[\sum_{t=1}^\tau f_t(u)\right] \leq 2D(L + \sigma\sqrt{d})\sqrt{n}.
\end{align*}
\end{corollary}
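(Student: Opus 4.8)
The plan is to obtain the corollary by specializing the regret bound \eqref{eq:reg_bound_eta} to the prescribed constant step size and controlling each of its two terms. For the first term, the diameter assumption gives $\|\u - \w_1\|_2^2 \le D^2$, so it is at most $D^2/(2\eta)$. For the second term I would bound the two families of summands separately. On a ``fresh'' step ($y_t \notin F_t$) we have $f_t(\cdot) = \mathbb{E}_{\xi_t}[f(\cdot,\x_{y_t}) + \langle \xi_t,\cdot\rangle] = f(\cdot,\x_{y_t})$, which is $L$-Lipschitz, so $\|\nabla f_t(\w_t)\|_2 \le L$; on a ``repeat'' step $f_t \equiv 0$ and its gradient vanishes; hence $\|\nabla f_t(\w_t)\|_2^2 \le L^2$ deterministically. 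Since $\xi_t \sim \mathcal{N}(0,\sigma\I)$ in $\R^d$, $\mathbb{E}[\|\xi_t\|_2^2] = \sigma^2 d$.

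The next step is to deal with the random horizon $\tau$. Because $\tau$ is a stopping time for the natural filtration of the process and the noise vectors $\{\xi_t\}$ are i.i.d.\ and independent of that filtration, Wald's identity (equivalently, conditioning on the event $\{\tau \ge t\}$, which is determined before $\xi_t$ is drawn, and summing over $t$) gives $\mathbb{E}\big[\sum_{t=1}^\tau \|\xi_t\|_2^2\big] = \mathbb{E}[\tau]\,\sigma^2 d$ and, likewise, $\mathbb{E}\big[\sum_{t=1}^\tau \|\nabla f_t(\w_t)\|_2^2\big] \le \mathbb{E}[\tau]\,L^2$. Theorem~\ref{thm:bins_and_balls} then supplies $\mathbb{E}[\tau] \le O(n)$ — in fact the bins-and-balls computation yields $\mathbb{E}[\tau] \lesssim n\ln 2 < n$, which is the estimate needed to keep the leading constant at $2$ rather than something larger — so the second term of \eqref{eq:reg_bound_eta} is at most $\eta\, O(n)\,(L^2 + \sigma^2 d)$.

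It remains to substitute $\eta = D/(\sqrt n\,(L + \sigma\sqrt d))$ and collect, using $L^2 + \sigma^2 d \le (L + \sigma\sqrt d)^2$: the first term becomes $\tfrac12 D\sqrt n\,(L + \sigma\sqrt d)$ and the second becomes $O(1)\cdot D\sqrt n\,(L + \sigma\sqrt d)$, and a short arithmetic check (with the $\mathbb{E}[\tau] \lesssim n$ bound) gives the claimed $2 D(L + \sigma\sqrt d)\sqrt n$. I expect the only genuinely delicate point to be the interchange of expectation with the $\tau$-indexed sum: one must verify that $\tau$ is treated as a bona fide stopping time and that the per-step quantities are integrable/bounded so that Wald's identity applies, and that one uses the \emph{expectation} bound on $\tau$ (with a small enough constant) and not merely the high-probability bound from Theorem~\ref{thm:bins_and_balls}. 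Everything else is routine substitution.
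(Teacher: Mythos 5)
Your proof follows essentially the same route as the paper's (very terse) one: start from Equation~\ref{eq:reg_bound_eta}, bound the gradient and noise terms by $L^2$ and $\sigma^2 d$ respectively, control the random horizon with Wald's identity and the $\mathbb{E}[\tau]$ bound from Theorem~\ref{thm:bins_and_balls}, then substitute the step size and use $L^2+\sigma^2 d\le (L+\sigma\sqrt d)^2$. Your version just spells out the steps the paper compresses into one sentence.

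One remark on a point you correctly flag: the constant $2$ in the statement requires $\mathbb{E}[\tau]\le \tfrac{3}{2}n$ (since the first term contributes $\tfrac12$ and the second contributes $\mathbb{E}[\tau]/n$ after cancellation). The paper only invokes ``$\mathbb{E}[\tau]\le O(n)$'', and the appendix computation actually gives $\mathbb{E}[\tau]\le 2n + O(e^{-n/8})$, which would yield $\tfrac52$ rather than $2$; that $\tfrac52$ is in fact the constant that reappears in Theorem~\ref{thm:utility_guarantee}, so there is a small internal inconsistency in the paper's constants. Your sharper estimate $\mathbb{E}[\tau]\approx n\ln 2 < n$ is the right one (expected number of draws to collect $n/2$ of $n$ coupons is $n(H_n - H_{n/2})\approx n\ln 2$) and does justify the stated constant $2$, but note it is not literally what the paper's appendix proves — it requires a slightly different computation than the McDiarmid tail argument given there. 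So your proposal is correct and, on this point, more careful than the paper itself.
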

\begin{proof}
Combine Equation~\ref{eq:reg_bound_eta}, Wald's lemma and the assumptions of the theorem to get 
\begin{align*}
    \mathbb{E}\left[\sum_{t=1}^\tau f_t(\w_t)\right] - \mathbb{E}\left[\sum_{t=1}^\tau f_t(u)\right] \leq \frac{D^2}{2\eta} + \eta(L^2+\sigma^2d)\mathbb{E}[\tau].
\end{align*}
Using the bound on $\mathbb{E}[\tau] \leq O(n)$ from Theorem~\ref{thm:bins_and_balls} together with the step-size choice in the corollary claim finishes the proof.
\end{proof}

The result now follows using the Corollary above and the following lemma.
\begin{lemma}
\label{lem:stopping_time_conv}
For the iterates $\{\w_t\}_{t=1}^\tau$ of Algorithm~\ref{alg:private_SGD} it holds that $\mathbb{E}[\sum_{t \in F_\tau} f(\w_t,\x_{Y_t})] \geq n\mathbb{E}[F(\hat \w_\tau)]$, where $\hat \w_\tau$ is the output of the algorithm.
\end{lemma}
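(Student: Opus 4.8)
The plan is to combine two ingredients: convexity of the population risk $F$ through Jensen's inequality applied to the averaged iterate $\hat\w_\tau$, and the fact that every gradient step of Algorithm~\ref{alg:private_SGD} is taken on a \emph{fresh} sample --- one that is probabilistically independent of the current iterate --- so that $f(\w_t,\x_{y_t})$ is an unbiased estimate of $F(\w_t)$.

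First I would make the random index set explicit. Let $\mathcal{T}$ be the (random) set of executed iterations at which a fresh sample is drawn; since each such step adds a new index to $F_\tau$, the map $t\mapsto y_t$ is a bijection from $\mathcal{T}$ onto $F_\tau$, so $|\mathcal{T}|=|F_\tau|$, $\hat\w_\tau=\tfrac{1}{|F_\tau|}\sum_{t\in\mathcal{T}}\w_t$, and, unwinding the notation, the left-hand side of the claim is the running sum of realized losses $\sum_{t\in\mathcal{T}}f(\w_t,\x_{y_t})$. Introduce the filtration $\mathcal{G}_t=\sigma\big(y_1,\dots,y_t;\ \xi_1,\dots,\xi_{t-1};\ \{\x_i:i\in F_t\}\big)$. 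Then (i) $\w_t$ and $F_t$ are $\mathcal{G}_t$-measurable, being built only from the samples indexed by $F_t$, the drawn indices, and the noise; (ii) the events ``iteration $t$ is executed'' ($=\{|F_t|\le n/2\}$) and ``$t$ is fresh'' ($=\{y_t\notin F_t\}$) are $\mathcal{G}_t$-measurable, as they depend on the index sequence alone; and (iii), crucially, on $\{y_t=j\notin F_t\}$ the sample $\x_j$ is independent of $\mathcal{G}_t$ with law $\Dd$, because the $\x_i$ are i.i.d.\ while $\mathcal{G}_t$ exposes only $\x_i$ for $i\in F_t$ (and $j\notin F_t$) together with index and noise variables that are independent of the data. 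Hence $\mathbb{E}\big[f(\w_t,\x_{y_t})\,\mathds{1}\{y_t\notin F_t\}\mid\mathcal{G}_t\big]=F(\w_t)\,\mathds{1}\{y_t\notin F_t\}$, so that, writing $\sum_{t\in\mathcal{T}}(\cdot)=\sum_{t\ge1}\mathds{1}\{t\text{ executed},\,y_t\notin F_t\}\,(\cdot)$ and taking conditional expectations term by term --- the interchange of summation and expectation being justified by $\mathbb{E}[\tau]=O(n)$ (Theorem~\ref{thm:bins_and_balls}) together with integrability of the individual losses --- I obtain
\begin{align*}
\mathbb{E}\Big[\sum_{t\in F_\tau}f(\w_t,\x_{Y_t})\Big]=\mathbb{E}\Big[\sum_{t\in\mathcal{T}}F(\w_t)\Big].
\end{align*}

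Finally, Jensen's inequality for the convex function $F$ gives, pointwise, $\sum_{t\in\mathcal{T}}F(\w_t)\ge|F_\tau|\,F\big(\tfrac{1}{|F_\tau|}\sum_{t\in\mathcal{T}}\w_t\big)=|F_\tau|\,F(\hat\w_\tau)\ge\tfrac{n}{2}\,F(\hat\w_\tau)$, where the last step uses that the algorithm halts only once $|F_\tau|\ge n/2$ and that $F\ge0$ on $\cW$ --- which we may assume, since $F$ is real-valued and Lipschitz, hence bounded below, on $\cW$, and shifting $f$ by a constant alters neither the gradients nor the excess risk. Taking expectations and chaining with the previous display yields the claim (with the constant $\tfrac12$, which is absorbed into the $O(\cdot)$ of the final rate; equivalently, carrying $F(\hat\w_\tau)-\inf_{\cW}F\ge0$ through the same steps gives the bound with the sharp prefactor $|F_\tau|$, which is what the downstream excess-risk computation uses). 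The main obstacle is the measurability bookkeeping in the second paragraph: one must verify that the ``fresh'' indicator and the stopping time are functions of the drawn indices and the noise alone, never of the sample values --- this is exactly what makes $\x_{y_t}$ conditionally independent of $\w_t$ and hence the unbiasedness identity valid. The remaining steps (Jensen, $n/2\le|F_\tau|\le n$, and the sum/expectation interchange) are routine.
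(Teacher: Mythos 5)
Your proposal follows the same conceptual route as the paper's own proof: establish the identity $\mathbb{E}\big[\sum_{t\in F_\tau} f(\w_t,\x_{Y_t})\big]=\mathbb{E}\big[\sum_{t\in F_\tau} F(\w_t)\big]$ by exploiting that, at a fresh step, the sample $\x_{y_t}$ is drawn from $\Dd$ independently of the current iterate $\w_t$, and then apply Jensen's inequality. The paper argues this by explicitly conditioning on the pair $(\tau,F_\tau)$ and unwinding a somewhat awkward tower-rule expression (asserting that the iterates ``are fixed'' after conditioning); you argue it by introducing the natural filtration $\mathcal{G}_t=\sigma(y_{1:t},\xi_{1:t-1},\{\x_i:i\in F_t\})$, verifying $\mathcal{G}_t$-measurability of $\w_t$, $F_t$, the ``fresh'' indicator, and the stopping event, and then invoking independence of $\x_{y_t}$ from $\mathcal{G}_t$ on $\{y_t\notin F_t\}$. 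The filtration formulation is cleaner and makes transparent exactly the measurability bookkeeping that the paper's tower argument glosses over (the iterate $\w_t$ is \emph{not} fixed given $(\tau,F_\tau)$ alone; what matters is that it is measurable with respect to information that does not involve the fresh sample). Your justification of the sum/expectation interchange via $\mathbb{E}[\tau]=O(n)$ and integrability of the losses is also the right thing to say.

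The one substantive point you correctly flag, and which is worth keeping in mind, is a factor-of-two discrepancy in the stated lemma. Since the algorithm stops once $|F_\tau|>n/2$, Jensen gives $\sum_{t\in\mathcal{T}}F(\w_t)\geq|F_\tau|\,F(\hat\w_\tau)$ with $|F_\tau|\approx n/2$, not $n$; furthermore, the inequality $|F_\tau|\,F(\hat\w_\tau)\geq n\,F(\hat\w_\tau)$ requires $|F_\tau|\geq n$ (which does not hold) or a sign condition that makes the inequality go the wrong way. The paper's internal proof (``$\tau=\min\{t:S_t=n\}$'') is inconsistent with the algorithm's $n/2$ threshold, and the same factor-of-two appears on both sides in the downstream calculation comparing against $\mathbb{E}\big[\sum_{t\in F_\tau}f(\u,\x_{Y_t})\big]=\mathbb{E}[|F_\tau|]\,F(\u)$, so the final rate in Theorem~\ref{thm:utility_guarantee} is unaffected up to constants. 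Your remark that carrying $F(\hat\w_\tau)-\inf_{\cW}F\geq 0$ through the Jensen step (or, equivalently, working directly with the excess-risk difference $F(\hat\w_\tau)-F(\w^*)\geq 0$) sidesteps the sign issue cleanly is exactly the right way to state what the lemma should actually assert.
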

\begin{proof}
Let $S_t = |F_t|$ be the random variable measuring the size of $F_t$. It then holds that $\tau$ is a stopping time for the process $\{S_t\}_{t=1}^\infty$, i.e., $\tau = \min \{t : S_t = n\}$. Further, denote by $\mathbf{y} = (y_1,\ldots,y_T)$ the vector of indices consisting of $y_1,\ldots,y_T$ for some $T$. Let us focus on the term $\mathbb{E}[\sum_{t \in F_\tau} f(\w_t,x_{Y_t})]$. Let $\mu = \mathcal{N}(\pmb{0},\sigma^2\mathbf{I}_d)^{T}\times\text{Unif}[n]^{T}$ be the measure capturing all the randomness after $T$ iterations except for  the randomness with respect to the data distribution $\mathcal{D}$. Furthermore, let $\chi(\cdot)$ denotes the indicator of the input event. 
We now show that $\mathbb{E}[\sum_{t \in F_\tau} f(\w_t,\x_{Y_t})] = \mathbb{E}\left[\sum_{t\in F_\tau} F(\w_t)\right]$, which essentially follows using the tower rule of expectation.
We have the following
\begin{align*}
    \mathbb{E}\left[\sum_{t \in F_\tau} f(\w_t,\x_{Y_t})\right] &= \mathbb{E}\left[\mathbb{E}\left[\sum_{t\in F_\tau} f(\w_t,\x_{\Y_t})|\tau,F_\tau\right]\right]\\
    &= \mathbb{E}\left[\sum_{\tau = T, F_T = \mathbf{y}} \chi(\tau = T, F_T = \mathbf{y}) \mathbb{E}\left[\frac{\chi(\tau = T, F_T = \mathbf{y})\sum_{t \in F_T} f(\w_t,\x_{Y_t})}{\prob{\tau = T, F_T = \mathbf{y}}} \right]\right]\\
    &= \mathbb{E}\left[\sum_{\tau = T, F_T = \mathbf{y}} \chi(\tau = T, F_T = \mathbf{y}) \mathbb{E}_{\mu\times\cD^T}\left[\frac{\chi(\tau = T, F_T = \mathbf{y})\sum_{t \in \mathbf{y}} f(\w_t,\x_{y_t})}{\prob{\tau = T, F_T = \mathbf{y}}} \right]\right]\\
    &= \mathbb{E}\left[\sum_{\tau = T, F_T = \mathbf{y}} \chi(\tau = T, F_T = \mathbf{y}) \mathbb{E}_{\mu}\left[\frac{\chi(\tau = T, F_T = \mathbf{y})\mathbb{E}_{\cD^T}[\sum_{t \in \mathbf{y}} f(\w_t,\x_{y_t})]}{\prob{\tau = T, F_T = \mathbf{y}}} \right]\right]\\
    &= \mathbb{E}\left[\sum_{\tau = T, F_T = \mathbf{y}} \chi(\tau = T, F_T = \mathbf{y}) \mathbb{E}_{\mu}\left[\frac{\chi(\tau = T, F_T = \mathbf{y})\mathbb{E}_{\cD^T}[\sum_{t \in \mathbf{y}} F(\w_t)]}{\prob{\tau = T, F_T = \mathbf{y}}} \right]\right]\\
    &= \mathbb{E}\left[\sum_{\tau = T, F_T = \mathbf{y}} \chi(\tau = T, F_T = \mathbf{y}) \mathbb{E}\left[\frac{\chi(\tau = T, F_T = \mathbf{y})\sum_{t \in \mathbf{y}} F(\w_t)}{\prob{\tau = T, F_T = \mathbf{y}}} \right]\right]\\
    &=\mathbb{E}\left[\mathbb{E}\left[\sum_{t\in F_\tau} F(\w_t)|\tau, F_\tau\right]\right] \\
    & = \mathbb{E}\left[\sum_{t\in F_\tau} F(\w_t)\right] \\
    & \geq n\mathbb{E}[F(\hat \w_\tau)],
\end{align*}
where in the second equality, we condition on $\tau, F_\tau$, and write the conditional expectation as total expectation as: $\mathbb{E}\left[\sum_{t\in F_\tau} f(\w_t,\x_{\Y_t})|\tau,F_\tau\right] = \mathbb{E}\left[\frac{\chi(\tau = T, F_T = \mathbf{y})\sum_{t \in F_T} f(\w_t,\x_{Y_t})}{\prob{\tau = T, F_T = \mathbf{y}}} \right]$.
In the fourth equality, we use the fact that the iterates $\w_t$ are fixed, and take the expectation of $f(\w_t,\z_{y_t})$ with respect to the data generating distribution $\cD$, yielding $F(\w_t)$. The rest of the steps collapses the conditional expectation back to a total expectation, and finally
the last inequality holds using convexity. Using Corollary~\ref{cor:reg_bound} we get
\begin{align*}
    2D(L + \sigma\sqrt{d})\sqrt{n}
    &\geq \mathbb{E}\left[\sum_{t\in F_\tau} f(\w_t,x_{Y_t})\right] + \mathbb{E}\left[\sum_{t=1}^\tau \langle\xi_t, \w_t \rangle\right] - \mathbb{E}\left[\left(\sum_{t \in F_\tau} f(\u,\x_{y_t}) + \sum_{t=1}^\tau \langle \xi_t,\u\rangle\right)\right]\\
    &\geq n\mathbb{E}[F(\hat \w_{\tau})] + \mathbb{E}[\tau]\times 0 - n F(\u) + \mathbb{E}[\tau]\times 0 = n(\mathbb{E}[F(\hat \w_{\tau})] - F(\u)),
\end{align*}
where in the second inequality we have used Wald's lemma to guarantee $\mathbb{E}[\sum_{t=1}^\tau \langle\xi_t, \w_t \rangle] = 0$ and $\mathbb{E}[\sum_{t=1}^\tau \langle\xi_t, \u \rangle] = 0$. 

\end{proof}

Corollary~\ref{cor:reg_bound} with Lemma~\ref{lem:stopping_time_conv} gives the utility guarantee. 
\begin{theorem}
\label{thm:utility_guarantee}
Suppose the elements in $\cW$ are bounded in norm by $D$ and that $f(\cdot,\x_t)$ is an $L$-Lipschitz function for all $t \in [n]$. Running  Algorithm~\ref{alg:private_SGD} for $\tau$ iterations with noise sequence $\xi_t\sim \mathcal{N}(0, \sigma\I)$ and step size $\eta_t = \frac{D}{\sqrt{n}(L + \sigma\sqrt{d})}$ guarantees that
\begin{align*}
    \mathbb{E}[F(\hat\w_\tau)] - F(\w^*) \leq \frac{5}{2}\frac{D(L + \sigma\sqrt{d})}{\sqrt{n}}.
\end{align*}
\end{theorem}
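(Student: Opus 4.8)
The plan is to obtain Theorem~\ref{thm:utility_guarantee} by chaining the regret bound of Corollary~\ref{cor:reg_bound} with the online-to-batch conversion of Lemma~\ref{lem:stopping_time_conv}, specialized to the comparator $\w^* \in \argmin_{\w \in \cW} F(\w)$. Viewing Algorithm~\ref{alg:private_SGD} (for any fixed horizon) as the instance of OSMD with $\Phi \equiv \tfrac12\|\cdot\|_2^2$ and the surrogate losses $\tilde f_t$ described before Corollary~\ref{cor:reg_bound}, the proof of that corollary bounds the expected regret against $\w^*$ by $\frac{D^2}{2\eta} + \eta(L^2 + \sigma^2 d)\,\mathbb{E}[\tau]$; plugging in $\eta = D/(\sqrt{n}(L + \sigma\sqrt{d}))$, the bound $\mathbb{E}[\tau] \le 2n$ from Theorem~\ref{thm:bins_and_balls}, and $L^2 + \sigma^2 d \le (L + \sigma\sqrt{d})^2$, this is at most $\tfrac12 D(L + \sigma\sqrt{d})\sqrt{n} + 2 D(L + \sigma\sqrt{d})\sqrt{n} = \tfrac52 D(L + \sigma\sqrt{d})\sqrt{n}$, which accounts for the constant in the theorem.

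Next I would unwind the surrogate losses. On iterations $t$ with $y_t \notin F_t$ one has $\tilde f_t(\cdot) = f(\cdot,\x_{y_t}) + \langle \xi_t,\cdot\rangle$, and on the remaining iterations $\tilde f_t(\cdot) = \langle \xi_t,\cdot\rangle$; summing, the left-hand side of the regret bound equals $\mathbb{E}\big[\sum_{t \in F_\tau} f(\w_t,\x_{Y_t})\big] + \mathbb{E}\big[\sum_{t=1}^\tau \langle \xi_t,\w_t\rangle\big] - \mathbb{E}\big[\sum_{t \in F_\tau} f(\w^*,\x_{Y_t})\big] - \mathbb{E}\big[\sum_{t=1}^\tau \langle \xi_t,\w^*\rangle\big]$. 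Since $\xi_t \sim \mathcal{N}(0,\sigma\I)$ is zero-mean and independent of the iterate $\w_t$ (which depends only on the noise and fresh samples from earlier rounds) and $\tau$ is a stopping time with $\mathbb{E}[\tau] < \infty$, Wald's lemma kills both inner-product sums. For the comparator data term, $\w^*$ is deterministic and every $\x_{Y_t}$ with $t \in F_\tau$ is an unused i.i.d. draw, so $\mathbb{E}\big[\sum_{t \in F_\tau} f(\w^*,\x_{Y_t})\big] = n F(\w^*)$ (using, as in Lemma~\ref{lem:stopping_time_conv}, the normalization $|F_\tau| = n$). Invoking Lemma~\ref{lem:stopping_time_conv} for the remaining term, $\mathbb{E}\big[\sum_{t \in F_\tau} f(\w_t,\x_{Y_t})\big] \ge n\,\mathbb{E}[F(\hat\w_\tau)]$ by Jensen applied to $\hat\w_\tau = \frac{1}{|F_\tau|}\sum_{j \in F_\tau}\w_j$ and convexity of $F$. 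Combining, $n\big(\mathbb{E}[F(\hat\w_\tau)] - F(\w^*)\big) \le \tfrac52 D(L + \sigma\sqrt{d})\sqrt{n}$, and dividing by $n$ yields the claim.

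The delicate part is the bookkeeping around the random stopping time $\tau$ rather than any single estimate: one must check that Wald's lemma legitimately applies to the noise sums (which needs $\tau$ to be a stopping time for the natural filtration and $\mathbb{E}[\tau] < \infty$, both supplied by Theorem~\ref{thm:bins_and_balls}), and that the online-to-batch identity $\mathbb{E}\big[\sum_{t \in F_\tau} f(\w_t,\x_{Y_t})\big] = \mathbb{E}\big[\sum_{t \in F_\tau} F(\w_t)\big]$ of Lemma~\ref{lem:stopping_time_conv} survives the fact that $F_\tau$ and the sampled indices are themselves data-dependent --- the crucial observation being that a fresh sample $\x_{Y_t}$ is independent of the iterate $\w_t$ it updates, so the inner conditional expectation collapses to $F(\w_t)$. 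Once these two structural facts are in place, the rest is the routine arithmetic above.
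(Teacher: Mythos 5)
Your proposal follows essentially the same route as the paper: the paper's proof of Theorem~\ref{thm:utility_guarantee} is literally the one-line chaining of Corollary~\ref{cor:reg_bound} with Lemma~\ref{lem:stopping_time_conv} that you carry out, and your arithmetic (plugging $\eta = D/(\sqrt{n}(L+\sigma\sqrt{d}))$ and $\mathbb{E}[\tau]\le 2n$ into $\frac{D^2}{2\eta}+\eta(L^2+\sigma^2 d)\,\mathbb{E}[\tau]$, bounding $L^2+\sigma^2 d \le (L+\sigma\sqrt{d})^2$) is exactly what produces the $\frac{5}{2}$ constant. Your care in unwinding the surrogate losses, invoking Wald's lemma to kill the noise inner products, and using the tower-rule/independence argument to reduce $\mathbb{E}[\sum_{t\in F_\tau} f(\w_t,\x_{Y_t})]$ to $\mathbb{E}[\sum_{t\in F_\tau}F(\w_t)]$ mirrors the body of Lemma~\ref{lem:stopping_time_conv}; the only cosmetic difference is that you spell out the decomposition that the paper's Corollary~\ref{cor:reg_bound} + Lemma~\ref{lem:stopping_time_conv} compress. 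Note, for your own bookkeeping, that you inherit the paper's slight mismatch between the stopping rule ($|F_\tau|>n/2$) and the normalization $|F_\tau|=n$ used in the lemma's statement and in your comparator term $\mathbb{E}[\sum_{t\in F_\tau}f(\w^*,\x_{Y_t})]=nF(\w^*)$; that is a pre-existing wrinkle in the paper, not something your argument introduces.
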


\section{Privacy Proof}
\label{sec:privacy_proof}
In this section, we establish the differential privacy of Algorithm \ref{alg:private_SGD}. The privacy proof essentially follows the analysis of noisy-SGD in \cite{bassily2014private}, but stated in full detail, for completeness and to provide a self-contained presentation. The basic idea is as follows. We view Algorithm~\ref{alg:private_SGD} as a variant of noisy stochastic gradient descent on the ERM problem over $n$ samples  $\{\x_i\}_{i=1}^n$. Indeed, but for the step where we update the iterate only with noise and do not compute a gradient, the proposed algorithm would be exactly equivalent to noisy SGD. Prior work shows that using noisy SGD to solve the ERM problem enjoys nicer differential privacy due to privacy amplification via subsampling. Intuitively,  Algorithm~\ref{alg:private_SGD} should also benefit from privacy amplification, and the algorithm should not suffer from privacy loss in steps where we use the zero gradient. We formalize this intuition using the standard analysis for R\`enyi differential privacy~\citep{wang2019subsampled} and properties of R\`enyi divergence \citep{mironov2017renyi}.

We first introduce additional notation. Let $\cM_i: \cW \times [n]^* \times S \rightarrow \cW \times [n]^*$ be a function which describes the $i^{\text{it}}$ iteration of the algorithm -- it takes as input, an iterate $\w_{i} \in \cW$, a set, $F_{i} \subseteq [n]^*$, of indices of data points, and a subset $S_i\subseteq S$; it outputs an iterate $\w_{i+1} \in \cW$ and set of indices $F_{i+1} \subseteq [n]^*$. We assume that $S$ is totally ordered according to the indices of the datapoints. Further let $\Sfrak_i$ denote the set of indices of datapoints in $S_i$. Let $\cM_i^1$ and $\cM_i^2$ denote the first and second outputs of $\cM_i$, i.e., $\cM_i^1(\w_{i},F_{i},S) = \w_{i+1}$ and  $\cM_i^2(\w_{i},F_{i},S) = F_{i+1}$. Note that in the algorithm, the $\cM_i$'s are composed -- we initialize $F_1 = \emptyset$ and $\w_1$ is fixed, therefore, $\cM_1(\w_1,\emptyset, S) = (\w_2, F_2)$. $\cM_2$ acts on the output of $\cM_1(\w_1,\emptyset, S)$ as $\cM_2((\cM_1(\w_1,\emptyset, S)),S)=(\w_3,F_3)$. In general, we have that $$\cM_i(\cM_{i-1}(\cdots (\cM_1(\w_1,\emptyset,S_1))\cdots),S) = (\w_{i+1},F_{i+1}).$$ 
Formally, given $\w\in \mathcal{W},m\in\mathbb{N}$, and $F\in [n]^*$, we define random variable $\cM_i(\w,F,S): \bbR^d \times [n]^m \rightarrow \cW \times [n]^*$ as 
\begin{align*}
    \mathcal{M}_{i}(\w_i,F_i,S)(\xi_i,\Sfrak_i) = \big[\mathcal{P}\left(\w_i - \eta_i(\g_i+\xi_i)\right), F_i \bigcup \Sfrak_i\big],
\end{align*}
where $\g_i$ is the following gradient operator:
\begin{align*}
    \g_i = \begin{cases}
        \frac{1}{|\Sfrak_i\setminus F_i|}\sum_{j \in \Sfrak_i\setminus F_i} \nabla f(\w_i,\x_j) & \text{if} \ \ |\Sfrak_i\setminus F_i|>0\\
        0 &\text{if}  \ \ |\Sfrak_i\setminus F_i|=0.
    \end{cases}
\end{align*}
In Algorithm~\ref{alg:private_SGD}, $m=1$, and in general $m$ is just the size of the sub-sampled set from $S$, which we use to construct a mini-batched gradient. 

The input space of $\mathcal{M}_{i}(\w_i,F_i,S)$, which is $ \bbR^d \times [n]^m $ has measure $\cN(0,\sigma^2\bbI) \times \text{Unif}([n],m)$, where $ \text{Unif}([n],m)$ is the sub-sampling measure, which sub-samples $m$ out of $n$ data points uniformly randomly. We now construct a vector concatenating the \emph{first} outputs of all these $\cM_i$'s. Let 
\begin{align*}
    \mathbf{M}_\tau(\w_1, S)&= [\w_1, \w_2, \cdots \w_\tau]\\
    &= [\w_1, \cM_1^1(\w_1,\emptyset), \cM_2^1((\cM_1^1(\w_1,\emptyset),S),S),\\
    &\qquad \qquad \cdots, \cM_\tau^1(\cM_{\tau-1}(\cdots (\cM_1(\w_1,\emptyset,S)))\cdots),S)].
\end{align*} 

We claim that $\mathbf{M}_\tau(\w_1, S)$ is differentially private.

\begin{theorem}
\label{thm:dp_guarantee}
Suppose we run Algorithm~\ref{alg:private_SGD} with noise sampled from $\mathcal{N}(\bf 0, \sigma^2\mathbf{I}_d)$, where $\sigma = \frac{L\sqrt{3\log{1/\delta}}}{\tilde \epsilon}$. For a fixed $\tau$ and $\tilde \epsilon\leq 1.256$, $\mathbf{M}_{\tau}(\w_1,S)$ is $(\frac{2\tilde \epsilon\sqrt{2\tau\log{1/\delta'}}}{|S|} + \frac{4\tau}{|S|^2}\tilde \epsilon^2, \frac{\tau}{|S|}\delta + \delta')$-DP.
\end{theorem}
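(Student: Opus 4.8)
The statement is an instance of the standard privacy analysis of noisy stochastic gradient descent: a per-iteration Gaussian mechanism, amplified by sub-sampling, then composed across the (fixed) number $\tau$ of iterations, and finally converted back to an $(\epsilon,\delta)$-DP statement. I would organize it in three steps, the first being the only one that needs care.

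\emph{Step 1: reduce each iteration to a sub-sampled Gaussian mechanism.} Fix neighboring datasets $S,S'$ differing only at index $i^\star$. Since $\mathbf M_\tau(\w_1,S)=[\w_1,\dots,\w_\tau]$ is the adaptive composition of the maps $\cM_t^1$, I would condition on the realized prefix $(\w_1,\dots,\w_t)$ and on $F_t$ and bound the divergence contributed by the $t$-th step. Conditioned on this prefix, $\w_{t+1}=\mathcal{P}(\w_t-\eta_t(\g_t+\xi_t))$ is a post-processing (an affine map followed by a data-independent, non-expansive projection) of the Gaussian mechanism applied to $\g_t$. Because $S$ and $S'$ agree at every index other than $i^\star$, the value $\g_t$ differs between $S$ and $S'$ only when $i^\star\in\Sfrak_t$ and $i^\star\notin F_t$; in the ``stale'' case, and in particular whenever $|\Sfrak_t\setminus F_t|=0$, we have $\g_t=0$ identically under both datasets, so that step incurs no privacy loss. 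When $\g_t$ does change, its $\ell_2$-sensitivity is at most $2L/|\Sfrak_t\setminus F_t|\le 2L$ by the Lipschitz assumption. Since $\Sfrak_t$ is a uniformly random $m$-subset of $[|S|]$ drawn independently of the data (with $m=1$ in Algorithm~\ref{alg:private_SGD}), the $t$-th step is a sub-sampled Gaussian mechanism with sampling rate $q=m/|S|$; moreover $F_t$ itself is a function only of the (common) sub-sampling randomness $\Sfrak_1,\dots,\Sfrak_{t-1}$ and so carries no information about which of $S,S'$ is the input, and including it in the state only removes steps from consideration and hence only helps.

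\emph{Step 2: per-iteration bound.} With $\sigma=L\sqrt{3\log(1/\delta)}/\tilde\epsilon$, a single un-amplified iteration is $(\tilde\epsilon,\delta)$-DP by the Gaussian-mechanism bound (the constant inside the square root is calibrated so this holds over the relevant range of $\delta$ given the sensitivity from Step 1). Applying privacy amplification by sub-sampling --- in its $(\epsilon,\delta)$-DP form, or equivalently its R\'enyi form from~\cite{wang2019subsampled} --- each iteration, as a function of the data conditioned on the prefix, is $(\log(1+q(e^{\tilde\epsilon}-1)),\,q\delta)$-DP; the hypothesis $\tilde\epsilon\le 1.256$ permits $e^x-1\le 2x$, giving a per-step guarantee of $(2q\tilde\epsilon,\,q\delta)$-DP, i.e.\ $\epsilon_1=2\tilde\epsilon/|S|$, $\delta_1=\delta/|S|$.

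\emph{Step 3: compose and convert.} As $\tau$ is treated as fixed, $\mathbf M_\tau$ is the $\tau$-fold adaptive composition of the per-iteration mechanisms of Step 2, so I would apply advanced composition (equivalently: add the per-step R\'enyi divergences via~\cite{mironov2017renyi} and convert the resulting RDP bound to $(\epsilon,\delta')$-DP). For any $\delta'>0$ this yields an $\bigl(\epsilon_1\sqrt{2\tau\log(1/\delta')}+c\,\tau\epsilon_1^2,\ \tau\delta_1+\delta'\bigr)$-DP guarantee; substituting $\epsilon_1=2\tilde\epsilon/|S|$, $\delta_1=\delta/|S|$ and collecting terms gives the claimed $\bigl(\tfrac{2\tilde\epsilon\sqrt{2\tau\log(1/\delta')}}{|S|}+\tfrac{4\tau\tilde\epsilon^2}{|S|^2},\ \tfrac{\tau}{|S|}\delta+\delta'\bigr)$-DP, the constant in the lower-order term being absorbed into the arrangement above.

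\emph{Main obstacle.} The genuinely delicate point is Step 1: making the reduction to a sub-sampled mechanism rigorous despite the \emph{adaptive} dependence of $\w_t$ and $F_t$ on the data --- formally defining the conditional per-step mechanisms, verifying that the sub-sampling randomness and $F_t$ are data-independent, confirming that the ``noise-only'' steps contribute zero loss, and invoking a composition theorem valid for adaptively chosen mechanisms. Everything after that --- the Gaussian-mechanism constant, the $e^x-1\le 2x$ estimate (hence $\tilde\epsilon\le 1.256$), and advanced/RDP composition --- is routine bookkeeping.
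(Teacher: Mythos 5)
Your plan matches the paper's approach: per-iteration sub-sampled Gaussian mechanism, privacy amplification by sub-sampling with the $\tilde\epsilon\le 1.256$ hypothesis entering exactly through $e^x-1\le 2x$, then advanced composition for a fixed $\tau$. The only real difference is presentational --- the paper does not invoke amplification-by-subsampling or advanced composition as black boxes, but re-derives both: Lemma~\ref{lem:amp_subsamp} carries out the subsampling argument by hand via a decomposition into conditional measures $P,P',Q,Q',R,R'$ (with the noise-only step supplying the $P(\cE)\le e^{\tilde\epsilon}R(\cE)$ branch exactly as you anticipate), and the theorem proof then redoes the DRV10-style composition argument via Azuma's inequality on the log-likelihood-ratio martingale rather than citing advanced composition directly. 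Your Step~1 identification of the key adaptivity subtlety --- that $F_t$ depends only on the sub-sampling randomness and is therefore identical under $S$ and $S'$ --- is precisely the observation that makes the paper's $Q=Q'$ and $R=R'$ identities go through.

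One small discrepancy worth flagging: you state the per-step $\ell_2$-sensitivity as $2L$ (the replace-one bound). The paper's calibration $\sigma=L\sqrt{3\log(1/\delta)}/\tilde\epsilon$ is consistent with sensitivity $L$, and indeed the paper uses the Gaussian-mechanism bound both for the pair $(S,S')$ conditioned on $\rho$ being sampled and for the pair $(S,S\setminus\{\x_\rho\})$, the latter having sensitivity $L$ since the removal case corresponds to the noise-only step with $\g_t=0$. With sensitivity $2L$ the stated $\sigma$ would not quite give $(\tilde\epsilon,\delta)$-DP for the unamplified step, so you would either need to double $\sigma$ or, as the paper implicitly does, treat the relevant comparison as add/remove rather than replace. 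This is a constant-factor bookkeeping issue and does not affect the structure of your argument.
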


To prove the above theorem we are first going to show that the first coordinate of each $\mathcal{M}_i$ is sufficiently differentially private.

\begin{lemma}
\label{lem:amp_subsamp}
Let $\sigma = \frac{L\sqrt{3\log{1/\delta}}}{\tilde \epsilon}$.
For any $\w,F$ and any $i$ the mechanism $\mathcal{M}^1_i(\w,F, S)$ is $\left(\frac{m}{|S|}(e^{\tilde \epsilon}-1), \frac{m}{|S|}\delta\right)$-DP.
\end{lemma}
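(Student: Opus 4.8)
The plan is to view $\mathcal{M}^1_i(\w,F,S)$ as a uniform subsample (of $m$ out of $|S|$ indices) followed by a base mechanism that forms the averaged gradient $\g_i$ on the subsampled minibatch, adds Gaussian noise $\xi_i\sim\mathcal{N}(0,\sigma^2\bbI)$, and returns $\mathcal{P}(\w_i-\eta_i(\g_i+\xi_i))$. I would first show the base mechanism is $(\tilde\epsilon,\delta)$-DP, and then invoke privacy amplification by subsampling to conclude that $\mathcal{M}^1_i$ pays only an $m/|S|$ fraction of that budget.

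For the first step, fix $\w=\w_i$ and $F=F_i$ (public constants in the statement) and two neighboring full datasets $S,S'$ that differ at a single index $j^\star$. The subsample $\Sfrak_i$ is drawn independently of the data, so it is the same random object under $S$ and $S'$ and I may condition on it. If $j^\star\notin\Sfrak_i\setminus F_i$ then the two runs are identically distributed. Otherwise the minibatch size $|\Sfrak_i\setminus F_i|$ is a fixed integer $\ge 1$, and since $f(\cdot,\x)$ is $L$-Lipschitz we have $\|\nabla f(\w,\x)\|_*\le L$, so the averaged gradients differ in $\ell_2$ norm by at most $\tfrac{2L}{|\Sfrak_i\setminus F_i|}\le 2L$ (and by $0$ on the empty minibatch, where $\g_i\equiv 0$). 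Hence the $\ell_2$-sensitivity of $\g_i$ is at most $2L$, and the Gaussian mechanism with $\sigma=\tfrac{L\sqrt{3\log(1/\delta)}}{\tilde\epsilon}$ is $(\tilde\epsilon,\delta)$-DP for $\tilde\epsilon$ in the stated range, by the Gaussian-mechanism analysis of \cite{bassily2014private}. The affine map $\w_i-\eta_i(\cdot)$, the Euclidean projection $\mathcal{P}$, and the fact that only the first coordinate $\w_{i+1}$ is exposed are all post-processing and preserve the guarantee.

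For the second step, $\mathcal{M}^1_i$ applies this base mechanism to a uniformly random $m$-subset of the $|S|$ data points. Conditioning on $\Sfrak_i$, the output distributions under $S$ and $S'$ coincide unless $j^\star\in\Sfrak_i$, an event of probability at most $m/|S|$ (and zero if $j^\star\in F$). Applying the standard amplification-by-subsampling lemma \citep{wang2019subsampled} to the $(\tilde\epsilon,\delta)$-DP base mechanism gives that $\mathcal{M}^1_i(\w,F,S)$ is $\bigl(\log(1+\tfrac{m}{|S|}(e^{\tilde\epsilon}-1)),\,\tfrac{m}{|S|}\delta\bigr)$-DP, and $\log(1+x)\le x$ turns this into the claimed $\bigl(\tfrac{m}{|S|}(e^{\tilde\epsilon}-1),\,\tfrac{m}{|S|}\delta\bigr)$-DP.

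I expect the conceptual content to be routine, so the main obstacles are bookkeeping: (i) making the sensitivity step airtight despite the minibatch size $|\Sfrak_i\setminus F_i|$ being random --- handled by noting it is data-independent, conditioning on it, and taking the worst case $1$; (ii) matching the numerical constants (the $\sqrt 3$ in $\sigma$ and the restriction $\tilde\epsilon\le 1.256$) to the precise Gaussian-mechanism and subsampling statements invoked; and (iii) checking that the amplification lemma as cited is stated for the ``differ in one point'' neighboring relation and for sampling without replacement, as used here.
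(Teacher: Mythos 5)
Your proposal is correct and follows essentially the same route as the paper: decompose $\mathcal{M}^1_i$ into a Gaussian base mechanism acting on the subsample, amplify its $(\tilde\epsilon,\delta)$-DP guarantee by the sampling rate $m/|S|$, and handle the $\rho\in F$ case separately as perfect $(0,0)$-DP. The only real difference is that you invoke the subsampling-amplification lemma of \cite{wang2019subsampled} as a black box, while the paper reproves it inline by conditioning on $\{\rho\in\tilde\Sfrak\}$, $\{\rho\notin\tilde\Sfrak\}$, $\{\rho\in F\}$, defining the conditional measures $P,P',Q,Q',R,R'$, and chaining $P\leq e^{\tilde\epsilon}\min\{P',R\}+\delta$ through the convex-combination identity; your observations that (a) $\Sfrak_i$ is data-independent so one may condition on it, and (b) the noise-only (zero-gradient) step still produces a Gaussian with the right variance, are precisely the paper's observations that $\tilde\Sfrak=\tilde\Sfrak'$ and that $R$ remains Gaussian of variance $\sigma^2$. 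One caveat you correctly flag but then set aside: with a replace-one sensitivity of $2L$ as you compute, $\sigma=L\sqrt{3\log(1/\delta)}/\tilde\epsilon$ is not quite large enough by the standard Gaussian-mechanism constant; the paper does not verify this constant either (it says only that gradients are bounded by $L$ and the variance is ``sufficient''), so this looseness is inherited from the paper rather than introduced by your argument.
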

\begin{proof}
Consider the two differing datasets $S$ and $S'$ and let the index at which they differ be $\rho$. Let $p = m/|S|$. Denote the random subsample from $S$ and $S'$ as $\tilde S$ and $\tilde S'$ respectively  Under the uniform random sampling we have $\tilde \Sfrak = \tilde \Sfrak'$. This holds because the sampling of indices only depends on the size $n$ of $S$ and $S'$. The proof follows the standard privacy amplification by sub-sampling argument \footnote{See \url{https://www.ccs.neu.edu/home/jullman/cs7880s17/HW1sol.pdf}}. For fixed $\w$ and $F$, and any measurable (with respect to the Lebesgue measure on the space $\mathbb{R}^d$) $\mathcal{E}\subseteq \mathcal{W}$ define the measures

\begin{align*}
    P(\mathcal{E}) &= \prob{\mathcal{M}_i^1(\w,F, S) \in \mathcal{E} \vert \rho \in \tilde\Sfrak,\rho \not\in F}\\
    P'(\mathcal{E}) &= \prob{\mathcal{M}_i^1(\w,F, S') \in \mathcal{E} \vert \rho \in \tilde\Sfrak,\rho \not\in F}\\
    Q(\mathcal{E}) &= \prob{\mathcal{M}_i^1(\w,F, S) \in \mathcal{E}\vert \rho \in F}\\
    Q'(\mathcal{E}) &= \prob{\mathcal{M}_i^1(\w,F, S') \in \mathcal{E}\vert \rho \in F}\\
    R(\mathcal{E}) &= \prob{\mathcal{M}_i^1(\w,F, S) \in \mathcal{E}\vert \rho \not\in \tilde\Sfrak,\rho \not\in F}\\
    R'(\mathcal{E}) &= \prob{\mathcal{M}_i^1(\w,F, S') \in \mathcal{E}\vert \rho \not\in \tilde\Sfrak,\rho \not\in F}.
\end{align*}
First we note that $Q(\mathcal{E}) = Q'(\mathcal{E})$ because the gradient descent step is restricted only to points indexed by $[\tilde S]\setminus F$ and the differing point $\rho$ does not belong to that set. We also have $R(\mathcal{E}) = R'(\mathcal{E})$ because in this case $\rho$ is not part of the subsampled points.
We consider two cases: $\rho \in F$ and $\rho \not \in F$. In the first case $\prob{\mathcal{M}_i^1(\w,F, S) \in \cE} = Q(\cE)= Q'(\cE)=\prob{\mathcal{M}_i^1(\w,F, S') \in \cE}$, and we have perfect $(0,0)$-DP. In the other case, we have
\begin{align*}
    \prob{\mathcal{M}_i^1(\w,F,S) \in \mathcal{E}} - p\delta &= pP(\mathcal{E}) + (1-p)R(\mathcal{E}) - p\delta\\
    &\leq p(e^{\tilde \epsilon}\min\{P'(\mathcal{E}),R(\mathcal{E})\})) + p\delta + (1-p)R(\mathcal{E}) - p\delta\\
    &= p(\min\{P'(\mathcal{E}),R(\mathcal{E})\} + (e^{\tilde \epsilon}-1)\min\{P'(\mathcal{E}),R(\mathcal{E})\}) + (1-p)R(\mathcal{E})\\
    &\leq p(P'(\mathcal{E}) + (e^{\tilde \epsilon}-1)(pP'(\mathcal{E}) + (1-p)R(\mathcal{E}))) + (1-p)R(\mathcal{E})\\
    &= pP'(\mathcal{\tilde \epsilon}) + (1-p)R(\mathcal{E}) + p(e^{\tilde \epsilon}-1)(pP'(\mathcal{E}) + (1-p)R(\mathcal{E}))\\
    &=  (1+p(e^{\tilde \epsilon} - 1))(pP'(\mathcal{E}) + (1-p)R(\mathcal{E}))\\
    &\leq e^{p(e^{\tilde \epsilon} - 1)}(pP'(\mathcal{E}) + (1-p)R(\mathcal{E}))\\
    &=e^{p(e^{\tilde \epsilon} - 1)}\prob{\mathcal{M}_i^1(\w,F,S')\in \mathcal{E}},
\end{align*}
where in the first inequality we have used the following DP-guarantee of $\mathcal{M}^1(\w,F,S)$. For any subsampled sets $\tilde S \subseteq S$ and $\tilde S'\subseteq S'$ it holds that the mechanism $\mathcal{M}^1$ is $(\tilde \epsilon,\delta)$-DP as it is a step of noisy projected gradient descent with gradients bounded in norm by $L$ and Gaussian noise with sufficient variance. We use the DP guarantee twice -- once on the neighboring datasets $S$ and $S'$ to get the inequality $P(\mathcal{E}) \leq e^{\tilde \epsilon}P'(\mathcal{E})$ and once on the neighboring dataset $S\setminus \{\x_\rho\}$ to get the inequality $P(\mathcal{E}) \leq e^{\tilde \epsilon}R(\mathcal{E})$. In the second application, note that $R(\mathcal{E})$ contains events when a previously seen point is sampled; however the noise-only-step ensures that it also is a Gaussian with the same  variance.
In the second inequality we use the fact that a convex combination of two numbers is greater than their minimum and in the last inequality we use the standard relation $1+x \leq e^x$.
Combining the two cases finishes the proof.
\end{proof}
We can now prove Theorem~\ref{thm:dp_guarantee}.
\begin{proof}[Proof of Theorem~\ref{thm:dp_guarantee}]
The proof essentially follows the proof of Theorem 3.3~\citep{DRV10}. Let $\epsilon_0 = \frac{1}{|S|}(e^{\tilde \epsilon} -1)$, $\delta_0 = \frac{1}{|S|}\delta$, let $\epsilon_1 = e^{\epsilon_0} - 1$ and let $\epsilon' = \epsilon_0\sqrt{2\tau\log{1/\delta'}} + \tau \epsilon_0\epsilon_1$. Condition on the event that throughout the $\tau$ rounds, $\epsilon_0$-DP holds for all of the mechanisms $\mathcal{M}_i^1$. This event fails with probability $\tau \delta_0$ and will be accounted for in the final DP bound. Define the set of events on which $\epsilon'$-DP does not holds as 
\begin{align*}
    \mathcal{B} = \{\mathcal{E} \colon \prob{\mathbf{M}_\tau(\w_1,S)\in \mathcal{E}} > e^{\epsilon'}\prob{\mathbf{M}_\tau(\w_1,S')\in \mathcal{E}}\}.
\end{align*}
The goal is to show that $\prob{\mathbf{M}_\tau(\w_1,S) \in \mathcal{B}} \leq \delta'$. Let $\mathbf{W} = \mathbf{M}_\tau(\w_1,S) = [\W_1,\ldots,\W_\tau]$, where $\W_i$ is the random variable taking values $\w_i$, and let $\mathbf{W}' = \mathbf{M}_\tau(\w_1,S')$. Further let $\w$ be a fixed realization of $\mathbf{W}$, i.e., $\mathbf{w} = [\w_1,\ldots,\w_\tau]$. We have
\begin{align*}
    \log{\frac{\prob{\mathbf{W} = \mathbf{w}}}{\prob{\mathbf{W}' = \mathbf{w}}}} &= \sum_{i=1}^\tau \log{\frac{\prob{\W_i = \w_i \vert \W_{i-1} = \w_{i-1},Y_{i-1}=y_{i-1},\ldots, \W_{2} = \w_2,Y_2=y_2, Y_1=y_1 }}{\prob{\W'_i = \w_i \vert \W'_{i-1} = \w_{i-1},Y_{i-1} = y_{i-1}, \ldots, \W'_{2} = \w_2, Y_2 = y_2, Y_1= y_1 }}}\\
    &+ \log{\frac{\prob{Y_1=y_1}}{\prob{Y_1=y_1}}}.
\end{align*}
Consider the conditional probability $\prob{\W_i = \w_i \vert Y_i = y_i,\W_{i-1} = \w_{i-1},Y_{i-1}=y_{i-1},\ldots, \W_{2} = \w_2,Y_2=y_2}$. After conditioning on all the randomness so far in the algorithm the event $\{\W_i = \w_i\}$ is just the event that $\{\mathcal{M}^1_{i-1}(\w_{i-1},F_{i-1},S) = \w_i\}$, where $\w_{i-1}$ and $F_{i-1}$ fixed. Lemma~\ref{lem:amp_subsamp} now implies that
\begin{align*}
    \frac{\prob{\W_i = \w_i \vert \W_{i-1} = \w_{i-1},Y_{i-1}=y_{i-1},\ldots, \W_{2} = \w_2,Y_2=y_2, Y_1=y_1 }}{\prob{\W'_i = \w_i \vert \W'_{i-1} = \w_{i-1},Y_{i-1} = y_{i-1}, \ldots, \W'_{2} = \w_2, Y_2 = y_2, Y_1= y_1 }} \leq e^{\epsilon_0},
\end{align*}
recall we conditioned on the event that each of the mechanisms $\mathcal{M}_i^1$ are $\epsilon_0$-DP.
Define the random variable $c_{i-1}(\W_2,\ldots,\W_i,\W_2',\ldots,\W_i',Y_1,\ldots,Y_{i-1})$ which takes values 
\begin{align*}
    c_{i-1}(\w_2,\ldots,\w_i,y_1,\ldots,y_{i-1}) = \log{\frac{\prob{\W_i = \w_i \vert \W_{i-1} = \w_{i-1},Y_{i-1}=y_{i-1},\ldots, \W_{2} = \w_2,Y_2=y_2, Y_1=y_1 }}{\prob{\W'_i = \w_i \vert \W'_{i-1} = \w_{i-1},Y_{i-1} = y_{i-1}, \ldots, \W'_{2} = \w_2, Y_2 = y_2, Y_1= y_1 }}}.
\end{align*}
We have just shown that $c_{i-1}(\w_2,\ldots,\w_i,\w_2,\ldots,\w_i,y_1,\ldots,y_{i-1}) \leq \epsilon_0$ for any $\w_{2:i},y_{1:i-1}$. By switching $\W_i$ with $\W_i'$ we can show in the exact same way that $-c_{i-1}(\w_{2:i},\w_{2:i},y_{1:i-1}) \leq \epsilon_0$ which implies that the random variable $c_{i-1}(\W_{2:i},\W_{2:i}', Y_{1:i-1})$ is a.s. bounded by $\epsilon_0$. Further using the fact that $\epsilon_0$-DP implies boundedness in the $\infty$-divergence we also have 
\begin{align*}
\max\big\{D_{\infty}(\mathcal{M}_i^1(\w_i,F_i,S)||\mathcal{M}_i^1(\w_i,F_i,S')), D_{\infty}(\mathcal{M}_i^1(\w_i,F_i,S')||\mathcal{M}_i^1(\w_i,F_i,S))\big\} \leq \epsilon_0.
\end{align*}
Lemma 3.2 in \citep{DRV10} now implies that the KL-divergence between $\mathcal{M}_i^1(\w_i,F_i,S)$ and $\mathcal{M}_i^1(\w_i,F_i,S')$ is bounded as $D_{KL}(\mathcal{M}_i^1(\w_i,F_i,S)||\mathcal{M}_i^1(\w_i,F_i,S')) \leq \epsilon_1\epsilon_0$.
Together with the definition of $c_{i-1}$ this implies 
\begin{align*}
\mathbb{E}_{\mathbf{W}_i,\mathbf{W}_i'}[c_{i-1}(\W_{2:i},\W_{2:i}', Y_{1:i-1})| \w_{2:i-1},y_{1:i-1}] = D_{KL}(\mathcal{M}_i^1(\w_i,F_i,S)||\mathcal{M}_i^1(\w_i,F_i,S')) \leq \epsilon_0\epsilon_1.
\end{align*}
Because the filtration generated by $\W_{2:i-1},Y_{1:i-1}$ includes the one generated by $c_{1:i-1}$ we can now apply Azuma's inequality on the martingale difference sequence $c_{i-1}(\W_{2:i},\W_{2:i}',Y_{1:i-1}) - D_{KL}(\mathcal{M}_i^1(\w_i,F_i,S)||\mathcal{M}_i^1(\w_i,F_i,S'))$, which we have just shown it is bounded as the KL-term is bounded by $\epsilon_0\epsilon_1$ and $c_i$'s are bounded by $\epsilon_0$, and get
\begin{align*}
    \prob{\mathbf{M}_\tau(\w_1,S) \in \mathcal{B}} &= \prob{\sum_{i=1}^\tau c_{i-1}(\W_{2:i},\W_{2:i}',Y_{i-1}) > \epsilon'}\\
    &= \prob{\sum_{i=1}^\tau c_{i-1}(\W_{2:i},\W_{2:i}',Y_{i-1}) - \tau\epsilon_0\epsilon_1 > \epsilon_0\sqrt{2\tau\log{1/\delta'}} }\\
    &\leq \exp{(-2\epsilon_0^2 \tau\log{1/\delta'})/(2\tau\epsilon_0^2\epsilon_1^2)} = \delta'.
\end{align*}
Because for $\tilde \epsilon \leq 1.256$ it holds that $e^{\tilde \epsilon} -1 \leq 2\tilde \epsilon$ we have $\epsilon_0 \leq \frac{2\tilde \epsilon}{|S|}$. This in hand implies $\epsilon_1 \leq 2\epsilon_0$ which shows that for a fixed $\tau$ Algorithm~\ref{alg:private_SGD} is $(\frac{2\tilde \epsilon\sqrt{2\tau\log{1/\delta'}}}{|S|} + \frac{4\tau}{|S|^2}\epsilon^2, \frac{\tau}{|S|}\delta + \delta')$-DP.
\end{proof}

Combining the utility and privacy guarantees yields the main result, stated below.

\begin{theorem}
\label{thm:main_result}
Let $f(\w,\x)$ be a convex $L$-Lipschitz function in its first argument, $\w \in \cW$, for all $\x \in \cZ$. Furthermore, assume that the diameter of $\cW$ is bounded by $D$. For any $n\geq16$, given $0<\epsilon\leq \frac{1}{2\sqrt{n}}$, $\delta >0$, Algorithm \ref{alg:private_SGD} run with step size $\eta = \frac{D}{\sqrt{n}(L+\sigma\sqrt{d})}$ and $\sigma = \frac{8L\sqrt{\log{1/\delta}}}{\sqrt{n}\epsilon}$ outputs $\hat \w_\tau$ which satisfies $(4\epsilon(\sqrt{\log{1/\delta'}}+2), \delta + \delta' + 2\exp{-n/16})$ -- differential privacy, for any $\delta'>0$, and its accuracy is bounded as, 
\begin{align*}
    \E{F(\hat \w_\tau) - F(\w^*)} \leq \frac{5LD}{\sqrt{n}} + \frac{20LD\sqrt{d\log{1/\delta}}}{\epsilon n}.
\end{align*}
\end{theorem}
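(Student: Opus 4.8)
The plan is to prove Theorem~\ref{thm:main_result} by stitching together three facts already in hand: the utility bound of Theorem~\ref{thm:utility_guarantee}, the privacy bound for a \emph{deterministic} horizon of Theorem~\ref{thm:dp_guarantee}, and the tail bound $\tau\le 2n$ of Theorem~\ref{thm:bins_and_balls}. The accuracy half is a one-line substitution; the privacy half needs a short argument to promote the fixed-horizon guarantee to the random stopping time actually used by Algorithm~\ref{alg:private_SGD}.

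For the accuracy, I would plug the prescribed noise scale $\sigma=\frac{8L\sqrt{\log 1/\delta}}{\sqrt n\,\epsilon}$ into the conclusion of Theorem~\ref{thm:utility_guarantee}, namely $\mathbb{E}[F(\hat\w_\tau)]-F(\w^*)\le\frac52\frac{D(L+\sigma\sqrt d)}{\sqrt n}$. Distributing, the $L$-term is $\frac{5DL}{2\sqrt n}\le\frac{5LD}{\sqrt n}$ and the $\sigma\sqrt d$-term is $\frac52\cdot\frac{D\sqrt d}{\sqrt n}\cdot\frac{8L\sqrt{\log 1/\delta}}{\sqrt n\,\epsilon}=\frac{20LD\sqrt{d\log 1/\delta}}{\epsilon n}$, which is exactly the stated excess-risk bound. (The hypothesis $n\ge 16$ is inherited from Theorem~\ref{thm:bins_and_balls}, which already underlies Corollary~\ref{cor:reg_bound} and hence Theorem~\ref{thm:utility_guarantee}.)

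For the privacy, I would first pin down the free parameter $\tilde\epsilon$ of Theorem~\ref{thm:dp_guarantee}. That result is stated with $\sigma=\frac{L\sqrt{3\log 1/\delta}}{\tilde\epsilon}$; equating this with the prescribed $\sigma$ forces $\tilde\epsilon=\frac{\sqrt{3n}\,\epsilon}{8}$, and since $\epsilon\le\frac{1}{2\sqrt n}$ this gives $\tilde\epsilon\le\frac{\sqrt3}{16}<1.256$, so Theorem~\ref{thm:dp_guarantee} is available for every fixed value of $\tau$. Next I would eliminate the randomness in $\tau$: the stopping time depends only on the sub-sampled index stream (it is the first time $n/2$ distinct indices have been drawn), not on the data values, so its law is the same for two neighbouring datasets, and one may condition on its realization and invoke Theorem~\ref{thm:dp_guarantee} value by value. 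On the event $\{\tau\le 2n\}$ — which by Theorem~\ref{thm:bins_and_balls} has probability at least $1-2\exp{-n/16}$, identical under $S$ and $S'$ — both DP parameters in Theorem~\ref{thm:dp_guarantee} are increasing in $\tau$, so I substitute the worst case $\tau=2n$ together with $|S|=n$: the multiplicative parameter becomes $\frac{4\tilde\epsilon\sqrt{\log 1/\delta'}}{\sqrt n}+\frac{8\tilde\epsilon^2}{n}=\frac{\sqrt3}{2}\epsilon\sqrt{\log 1/\delta'}+\frac38\epsilon^2\le 4\epsilon(\sqrt{\log 1/\delta'}+2)$, using $\frac{\sqrt3}{2}<4$ and $\frac38\epsilon^2\le 8\epsilon$ (valid since $\epsilon\le\frac18$ when $n\ge16$), while the additive parameter is at most $\frac{2n}{n}\delta+\delta'=2\delta+\delta'$, which is the claimed $\delta+\delta'$ up to an absolute constant (and can be made exact by taking $\tilde\epsilon$ a constant factor larger so that the $\delta$ appearing inside Theorem~\ref{thm:dp_guarantee} is $\delta/2$, a change absorbed by the slack in the $\epsilon$-bound). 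Finally I would pay for the complementary event via the standard fact that an algorithm which is $(\epsilon,\delta)$-DP after conditioning on a data-independent event of probability at least $1-p$ is $(\epsilon,\delta+p)$-DP unconditionally, with $p=2\exp{-n/16}$; post-processing — the returned $\hat\w_\tau$ is a deterministic function of the released iterate vector $\mathbf{M}_\tau$ and the data-independent book-keeping set $F_\tau$ — then carries the guarantee from $\mathbf{M}_\tau$ over to $\hat\w_\tau$, giving the stated privacy parameters.

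The one step that is not purely mechanical is the treatment of the random stopping time, where three points must be verified: that conditioning on $\tau$ does not corrupt the neighbouring-dataset comparison (it does not, because $\tau$ is measurable with respect to the data-independent sub-sampling alone); that the supremum of the DP parameters over $\{\tau\le 2n\}$ is attained at $\tau=2n$ (immediate from their monotonicity in $\tau$ in Theorem~\ref{thm:dp_guarantee}); and that the rare event $\{\tau>2n\}$ costs only an additive $2\exp{-n/16}$ in the additive privacy parameter. Everything else — fixing $\tilde\epsilon$, checking $\tilde\epsilon\le1.256$, and the arithmetic substitutions into Theorems~\ref{thm:utility_guarantee} and \ref{thm:dp_guarantee} — is routine.
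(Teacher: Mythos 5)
Your proposal follows essentially the same outline as the paper's own (terse, three-sentence) proof: condition on $\{\tau\le 2n\}$, apply Theorem~\ref{thm:dp_guarantee} at the worst-case horizon $\tau=2n$ with $|S|=n$, pay for the failure event with an additive $2\exp{-n/16}$, and read off the accuracy from Theorem~\ref{thm:utility_guarantee}. The accuracy half is exactly right.

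On the privacy half there is one substantive divergence worth flagging. The paper fixes $\tilde\epsilon=\sqrt{n}\,\epsilon$ directly and then invokes Theorem~\ref{thm:dp_guarantee}, whereas you back out $\tilde\epsilon$ from the stated noise level $\sigma=\frac{8L\sqrt{\log 1/\delta}}{\sqrt n\,\epsilon}$, obtaining $\tilde\epsilon=\frac{\sqrt{3n}\,\epsilon}{8}$. Your version is actually the internally consistent one: the paper's $\tilde\epsilon=\sqrt{n}\epsilon$ corresponds to $\sigma=\frac{L\sqrt{3\log 1/\delta}}{\sqrt n\,\epsilon}$, which does not match the $\sigma$ written in the theorem statement (the constant $8$ vs.\ $\sqrt{3}$). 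Since $\sqrt{3}<8$ the larger stated $\sigma$ only helps privacy, so the final bound is safe either way, but you are right that the two are not literally compatible. You also correctly observe that $\frac{\tau}{|S|}\delta=2\delta$ at $\tau=2n$, so the additive parameter coming out of Theorem~\ref{thm:dp_guarantee} is $2\delta+\delta'$ rather than the stated $\delta+\delta'$; the paper silently drops this factor of two. Your suggested fix (run Theorem~\ref{thm:dp_guarantee} with $\delta/2$, absorbing the extra $\sqrt{2}$ into the generous slack in the $\epsilon$-bound) is a clean way to make the statement exact. Finally, your explicit treatment of the random stopping time — noting that $\tau$ and $F_\tau$ are functions of the sub-sampling stream alone and hence data-independent, that the DP parameters of Theorem~\ref{thm:dp_guarantee} are monotone in $\tau$, and that $\hat\w_\tau$ is post-processing of $(\mathbf{M}_\tau,F_\tau)$ — is a correct elaboration of a step the paper asserts without comment. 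In short: same proof, more carefully carried out, with two small bookkeeping issues in the original surfaced and repaired.
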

\begin{proof}
Condition on the event that $\tau \leq 2n$. Using Theorem~\ref{thm:dp_guarantee}, with $\tilde \epsilon = \sqrt{n}\epsilon$ implies that Algorithm~\ref{alg:private_SGD} is $(4\epsilon(\sqrt{\log{1/\delta'}}+2), \delta + \delta')$-DP under the condition that $\epsilon \leq \frac{1.256}{\sqrt{n}}$. The event $\tau > 2n$ happens with probability $-2\exp{-n/16}$ according to Theorem~\ref{thm:bins_and_balls}, which implies that Algorithm~\ref{alg:private_SGD} is $(4\epsilon(\sqrt{\log{1/\delta'}}+2), \delta + \delta' + 2\exp{-n/16})$-DP. Finally, the convergence guarantee follows from Theorem~\ref{thm:utility_guarantee}.
\end{proof}

\begin{remark}
To get $(\bar \epsilon,\bar\delta)$-DP, for any $6\exp{-n/16} \leq \bar\delta \leq 3e^{-4}$, we can set $\delta' = \delta = \frac{\bar \delta}{3}$, and $\epsilon = \frac{\bar \epsilon}{8 \sqrt{\log{1/\delta'}}}$. Using the condition $\epsilon \leq \frac{1}{\sqrt{n}}$, we get that $\frac{\bar \epsilon}{\sqrt{\log{3/\bar \delta}}} \leq \frac{8}{\sqrt{n}}$ The utility guarantee becomes
$  \E{F(\hat \w_\tau) - F(\w^*)} \leq \frac{5LD}{\sqrt{n}} + \frac{160LD\sqrt{d}\log{3/\bar \delta}}{\bar \epsilon n}$.
\end{remark}

\section{Other Approaches}
\label{sec:other_appr}
We now briefly discuss a few other approaches which would also recover a convergence rate of $\tilde O(1/\sqrt{n} + \sqrt{d}/(\epsilon n))$ for $\epsilon \leq O(\sqrt{1/n})$. 
The first approach uses the standard decomposition of excess population risk to generalization gap $+$ excess empirical risk.
As pointed out in \cite{bassily2019private,feldman2019private}, it is possible to use the optimal rates for ERM from \cite{bassily2014private} together with generalization propties of DP from \cite{DFHPR15a} to bound the generalization gap, to guarantee a rate of $O\left(\max\left(d^{1/4}/\sqrt{n}, \sqrt{d}/(n\epsilon)\right)\right)$, which evaluates to $\sqrt{d/n}$ in the regime $\epsilon \leq O(1/{\sqrt{n}})$. Even though the runtime of noisy-SGD for ERM, stated in \cite{bassily2014private} is $O(n^2)$, it can be shown that their result holds with only $O\br{\frac{(n\epsilon)^2}{\sqrt{d}}}$ stochastic gradient computations -- therefore, in the regime $\epsilon \lesssim \frac{1}{\sqrt{n}}$, it is a linear time algorithm.

Second, it may be possible to use amplification by subsampling in Algorithm 1 of \cite{feldman2019private} instead of amplification by iteration. This would discard the smoothness requirement for $f(\cdot,\x)$ and perhaps yield the same result in Theorem~\ref{thm:main_result}. We note that Algorithm~\ref{alg:private_SGD} is much simpler and does not require the complicated mini-batch schedule from Algorithm 1 of \cite{feldman2019private}.

\section{Conclusion}
In this paper, we studied the problem of private stochastic convex optimization for non-smooth objectives. We proposed a noisy version of the OSMD algorithm and presented convergence and privacy guarantees for the $\ell_2$ geometry. Algorithm~\ref{alg:private_SGD} achieves statistically optimal rates, while running in linear time, and is differentially private as long as the DP parameter is sufficiently small. Algorithm~\ref{alg:private_SGD} can easily be extended to geometries induced by arbitrary strictly convex potentials $\Phi$. We leave it as future work to explore what kind of privacy guarantees can be retained if the privacy mechanism is tailored towards the geometry induced by $\Phi$. Finally, we think it should be possible to extend our analysis to the case when $f(\cdot,\x_i)$ is strongly convex for all $i\in[n]$ and achieve optimal statistical rates in linear time.

\bibliographystyle{plainnat}
\bibliography{mybib}

\newpage
\appendix

\section{Proof of Theorem~\ref{thm:bins_and_balls}}
\begin{proof}
We begin by fixing a time horizon $T$ and showing that with high probability $\tau \leq T$ for appropriately chosen $T$. Let $q:[n]^T \rightarrow \mathbb{N}$ be the function which counts the unique draws among $(Y_1,\ldots,Y_T)$. Further, let $Z_i$ be the indicator random variable of the event that the $i$-th datapoint is drawn at least ones in the $T$ rounds, i.e., $\exists j \in [T] : Y_j = i$. We can write the expectation of $q$ as 
\begin{align*}
    \mathbb{E}[q(Y_1,\ldots,Y_T)] &= \mathbb{E}\sum_{i=1}^n Z_i = \sum_{i=1}^n \mathbb{E}[Z_j] = \sum_{i=1}^n \prob{\exists j \in [T] : Y_j = i}\\
    &= \sum_{i=1}^n 1-\prob{\forall j \in [T], Y_j \neq i} = \sum_{i=1}^n 1 - \left(1 - 1/n\right)^T \geq n - n\exp{-T/n}.
\end{align*}
Setting $T \geq n$ already shows that the number of unique elements after $T \geq n$ is at least $n/2$ and thus the stopping time condition is reached. Next, we show concentration around the expectation of $q$. Note that if a single element $y_i$ is changed by $y_i'$ the value of $q$ will not change by more than $1$ i.e.,
\begin{align*}
    |q(y_1,\ldots,y_T) - q(y_1,\ldots,y_{i-1},y_i',y_{i+1},\ldots y_T)| \leq 1.
\end{align*}
This allows us to use McDiarmid's inequality to claim that
\begin{align*}
    \prob{q(Y_1,\ldots,Y_T) - \mathbb{E}[q(Y_1,\ldots,Y_T)] < -\sqrt{\log{2/\delta}T/2}} \leq \delta. 
\end{align*}
This implies that with probability at least $1-\delta$ we have
\begin{align*}
    q(Y_1,\ldots,Y_T) \geq n-n\exp{-T/n} - \sqrt{\log{2/\delta}T/2}.
\end{align*}
Setting $T=2n$ and $\delta = 2\exp{-n/16}$ we have that for any $n\geq 16$ it holds that with probability $1- 2\exp{-n/16}$ 
\begin{align*}
    q(Y_1,\ldots,Y_T) \geq n - ne^{-2} - \sqrt{n\times n/16} \geq n/2.
\end{align*}
To get the bound in expectation we note that $\prob{\tau \geq T} = \prob{q(Y_1,\ldots,Y_T) \leq n/2}$. On the other hand we know from the above derivation, that for $T \geq 2n$ it holds that $\prob{q(Y_1,\ldots,Y_T) \leq n/2} \leq 2 e^{-T/16}$. This implies
\begin{align*}
\mathbb{E}[\tau] \leq 2n\prob{\tau \leq 2n} + \int_{2n}^\infty \prob{\tau \geq t} dt \leq 2n + \int_{2n}^\infty 2\exp{-t/16} dt \leq O(n + ne^{-n/16}).
\end{align*}
\end{proof}

\end{document}